\newcommand{\dd}{\operatorname{d}\!}
\newcommand{\EE}{\mathbb{E}\;}
\newcommand{\Emu}{\mathbb{E}_\mu\;}
\newcommand{\mup}{{\mu+1}}
\DeclareMathOperator\erf{erf}
\newtheorem{theorem}{Theorem}[section]
\newtheorem{lemma}[theorem]{Lemma}
\title{Dynamics of stochastic gradient descent for two-layer neural networks in
  the teacher-student setup}
\author{%
  Sebastian Goldt\textsuperscript{1}, Madhu
  S.~Advani\textsuperscript{2}, Andrew M.~Saxe\textsuperscript{3}\\
  \textbf{Florent Krzakala\textsuperscript{4}, Lenka Zdeborov\'a\textsuperscript{1}}\\
  \textsuperscript{1} Institut de Physique Th\'eorique, CNRS, CEA,
  Universit\'e Paris-Saclay, Saclay, France\\
  \textsuperscript{2} Center for Brain Science, Harvard University,
  Cambridge, MA 02138, USA\\
  \textsuperscript{3} Department of Experimental Psychology, University of
  Oxford, Oxford, United Kingdom  \\
  \textsuperscript{4} Laboratoire de Physique Statistique, Sorbonne Universit\'es,\\
  Universit\'e Pierre et Marie Curie Paris 6, Ecole Normale Sup\'erieure, 75005
  Paris, France }
\begin{document}

\maketitle

\begin{abstract}
  Deep neural networks achieve stellar generalisation even when they have enough
  parameters to easily fit all their training data. We study this phenomenon by
  analysing the dynamics and the performance of over-parameterised two-layer
  neural networks in the teacher-student setup, where one network, the student,
  is trained on data generated by another network, called the teacher. We show
  how the dynamics of stochastic gradient descent (SGD) is captured by a set of
  differential equations and prove that this description is asymptotically exact
  in the limit of large inputs. Using this framework, we calculate the final
  generalisation error of student networks that have more parameters than their
  teachers. We find that the final generalisation error of the student increases
  with network size when training only the first layer, but stays constant or
  even decreases with size when training both layers. We show that these
  different behaviours have their root in the different solutions SGD finds for
  different activation functions. Our results indicate that achieving good
  generalisation in neural networks goes beyond the properties of SGD alone and
  depends on the interplay of at least the algorithm, the model architecture,
  and the data set.
\end{abstract}


Deep neural networks behind state-of-the-art results in image classification and
other domains have one thing in common: their size. In many applications, the
free parameters of these models outnumber the samples in their training set by
up to two orders of magnitude~\cite{Lecun2015,Simonyan2015}. Statistical
learning theory suggests that such heavily over-parameterised networks
generalise poorly without further
regularisation~\cite{Bartlett2003,Mohri2012,Neyshabur2015a,Golowich2017,Dziugaite2017,Arora2018,Allen-Zhu2018},
yet empirical studies consistently find that increasing the size of networks to
the point where they can easily fit their training data and beyond does not
impede their ability to generalise well, even without any explicit
regularisation~\cite{Neyshabur2015,Zhang2016a,Arpit2017}. Resolving this paradox
is arguably one of the big challenges in the theory of deep learning.

One tentative explanation for the success of large networks has focused on the
properties of stochastic gradient descent (SGD), the algorithm routinely used to
train these networks. In particular, it has been proposed that SGD has an
implicit regularisation mechanism that ensures that solutions found by SGD
generalise well irrespective of the number of parameters involved, for models as
diverse as (over-parameterised) neural
networks~\cite{Neyshabur2015,Chaudhari2018}, logistic
regression~\cite{Soudry2018} and matrix factorisation
models~\cite{Gunasekar2017,Li2018b}.


In this paper, we analyse the dynamics of one-pass (or online) SGD in two-layer
neural networks. We focus in particular on the influence of
over-parameterisation on the final generalisation error. We use the
teacher-student framework~\cite{Seung1992,Engel2001}, where a training data set
is generated by feeding random inputs through a two-layer neural network with
$M$ hidden units called the \emph{teacher}. Another neural network, the
\emph{student}, is then trained using SGD on that data set. The generalisation
error is defined as the mean squared error between teacher and student outputs,
averaged over all of input space. We will focus on student networks that have a
larger number of hidden units $K\ge M$ than their teacher. This means that the
student can express much more complex functions than the teacher function they
have to learn; the students are thus over-parameterised with respect to the
generative model of the training data in a way that is simple to quantify.  We
find this definition of over-parameterisation cleaner in our setting than the
oft-used comparison of the number of parameters in the model with the number of
samples in the training set, which is not well justified for non-linear
functions. Furthermore, these two numbers surely cannot fully capture the
complexity of the function learned in practical applications.

The teacher-student framework is also interesting in the wake of the need to
understand the effectiveness of neural networks and the limitations of the
classical approaches to generalisation~\cite{Zhang2016a}. Traditional approaches
to learning and generalisation are data agnostic and seek worst-case type
bounds~\cite{vapnik1998statistical}. On the other hand, there has been a
considerable body of theoretical work calculating the generalisation ability of
neural networks for data arising from a probabilistic model, particularly within
the framework of statistical
mechanics~\cite{Gardner1989,Kinzel1990,Seung1992,Watkin1993,Engel2001}. Revisiting
and extending the results that have emerged from this perspective is currently
experiencing a surge of
interest~\cite{Zdeborova2016,Advani2016,Chaudhari2017,Advani2017,Aubin2018,Baity-Jesi2018}.

In this work we consider two-layer networks with a large input layer and a
finite, but arbitrary, number of hidden neurons. Other limits of two-layer
neural networks have received a lot of attention recently. A series of
papers~\cite{Mei2018,Rotskoff2018,Chizat2018,Sirignano2018} studied the
mean-field limit of two-layer networks, where the number of neurons in the
hidden layer is very large, and proved various general properties of SGD based
on a description in terms of a limiting partial differential equation.  Another
set of works, operating in a different limit, have shown that infinitely wide
over-parameterised neural networks trained with gradient-based methods
effectively solve a kernel
regression~\cite{jacot2018neural,du2018gradient,allen2018convergence,Li2018a,zou2018stochastic,chizat2019lazy},
without any feature learning. Both the mean-field and the kernel regime
crucially rely on having an infinite number of nodes in the hidden layer, and
the performance of the networks strongly depends on the detailed scaling
used~\cite{chizat2019lazy,mei2019mean}. Furthermore,  a very wide hidden
layer makes it hard to have a student that is larger than the teacher in a
quantifiable way. This leads us to consider the opposite limit of large input
dimension and finite number of hidden units.

Our \textbf{main contributions} are as follows:

\emph{(i)} The dynamics of SGD (online) learning by two-layer neural networks in the
teacher-student setup was studied in a series of classic
papers~\cite{Biehl1995,Saad1995a,Saad1995b,Riegler1995,Saad1997} from the
statistical physics community, leading to a heuristic derivation of a set of
coupled ordinary differential equations (ODE) that describe the \emph{typical}
time-evolution of the generalisation error. \emph{We provide a rigorous
  foundation of the ODE approach to analysing the generalisation dynamics in the
  limit of large input size by proving their correctness.}

\emph{(ii)} These works focused on training only the first layer, mainly in the
case where the teacher network has the same number of hidden units and the
student network, $K=M$. {\it We generalise their analysis to the case where the
  student's expressivity is considerably larger than that of the teacher} in
order to investigate the {\it over-parameterised regime $K>M$.} 

\emph{(iii)} {\it We provide a detailed analysis of the dynamics of learning and
  of the generalisation when only the first layer is trained.} We derive
 a reduced set of coupled ODE that describes the generalisation dynamics for any
$K\ge M$ and obtain analytical expressions for the asymptotic
generalisation error of networks with linear and sigmoidal activation
functions. Crucially, we find that with all other parameters equal, the final
generalisation error \emph{increases} with the size of the student network. In
this case, SGD alone thus does not seem to be enough to regularise larger
student networks.

\emph{(iv)} {\it We finally analyse the dynamics when learning both layers}. We
give an analytical expression for the final generalisation error of sigmoidal
networks and find evidence that suggests that SGD finds solutions which amount
to performing an effective model average, thus improving the generalisation
error upon over-parameterisation. In linear and ReLU networks, we experimentally
find that the generalisation error does change as a function of $K$ when
training both layers. However, there exist student networks with better
performance that are fixed points of the SGD dynamics, but are not reached when
starting SGD from initial conditions with small, random weights.

Crucially, we find this range of different behaviours while keeping the training
algorithm (SGD) the same, changing only the activation functions of the networks
and the parts of the network that are trained. Our results clearly indicate that
the implicit regularisation of neural networks in our setting goes beyond the
properties of SGD alone. Instead, a full understanding of the generalisation
properties of even very simple neural networks requires taking into account the
interplay of at least the algorithm, the network architecture, and the data
set used for training, setting up a formidable research programme for the
future.

\textbf{Reproducibility ---} We have packaged the implementation of our
experiments and our ODE integrator into a user-friendly library with example
programs at~\url{https://github.com/sgoldt/nn2pp}. All plots were generated with
these programs, and we give the necessary parameter values beneath each plot.

\section{Online learning in teacher-student neural networks}
\label{sec:dynamics}

We consider a supervised regression problem with training set
$\mathcal{D}=\{(x^\mu, y^\mu)\}$ with $\mu=1,\ldots,P$. The components of the
inputs $x^\mu\in\mathbb{R}^N$ are i.i.d.\ draws from the standard normal
distribution~$\mathcal{N}(0, 1)$. The scalar labels $y^\mu$ are given by the
output of a network with $M$ hidden units, a non-linear activation function
$g: \mathbb{R}\to\mathbb{R}$ and fixed weights
$\theta^* = (v^*\in\mathbb{R}^M, w^*\in\mathbb{R}^{M \times N})$ with an
additive output noise $\zeta^\mu\sim\mathcal{N}(0, 1)$, called the
\emph{teacher} (see also Fig.~1a):
\begin{equation}
  y^\mu\equiv \phi(x^\mu, \theta^*) +\sigma\zeta^\mu, \qquad \textrm{where} \quad
  \phi(x, \theta^*) = \sum_{m=1}^M v^*_m g\left( \frac{w^*_m x}{\sqrt{N}} \right) = \sum_m^M v^*_m g(\rho_m)\, , 
\end{equation}
where $w^*_m$ is the $m$th row of $w^*$, and the local field of the $m$th
teacher node is $\rho_m \equiv w^*_m x / \sqrt{N}$. We will analyse three
different network types: sigmoidal with $g(x)=\erf(x/\sqrt{2})$, ReLU with
$g(x)=\max(x, 0)$, and linear networks where $g(x)=x$.

A second two-layer network with $K$ hidden units and weights
$\theta = (v\in\mathbb{R}^{K}, w\in\mathbb{R}^{K \times N})$, called the
\emph{student}, is then trained using SGD on the quadratic training loss
$E(\theta)\propto\sum_{\mu=1}^P {\left[ \phi(x^\mu, \theta) -
    y^\mu\right]}^2$. We emphasise that the student network may have a larger
number of hidden units $K\ge M$ than the teacher and thus be over-parameterised
with respect to the generative model of its training data.

The SGD algorithm
defines a Markov process $X^\mu \equiv \left[ v^*, w^*, v^\mu, w^\mu \right]$
with update rule given by the coupled SGD recursion relations
\begin{align}
  w_k^{\mu+1} &= w_k^{\mu} - \frac{\eta_w}{\sqrt{N}} v_k^\mu
                g'(\lambda_k^\mu) \Delta^\mu x^\mu,   \label{eq:sgdw}\\
  v_k^{\mu+1} &= v_k^{\mu} - \frac{\eta_v}{N} g(\lambda_k^\mu) \Delta^\mu. \label{eq:sgdv}
\end{align}
We can choose different learning rates $\eta_v$ and $\eta_w$ for the two layers
and denote by $g'(\lambda_k^\mu)$ the derivative of the activation function
evaluated at the local field of the student's $k$th hidden unit
$\lambda_k^\mu \equiv w_k x^\mu / \sqrt{N}$, and we defined the error term
$\Delta^\mu \equiv \sum_k v_k^\mu g\left(\lambda_k^\mu\right) - \sum_m v^*_m
g(\rho_m^\mu) - \sigma \zeta^\mu$. We will use the indices $i,j,k,\ldots$ to
refer to student nodes, and $n,m,\ldots$ to denote teacher nodes. We take
initial weights at random from $\mathcal{N}(0, 1)$ for sigmoidal networks, while
initial weights have variance~$1/\sqrt{N}$ for ReLU and linear networks.

The key quantity in our approach is the \emph{generalisation error} of the
student with respect to the teacher:
\begin{equation}
  \label{eq:eg}
  \epsilon_g(\theta, \theta^*) \equiv \frac{1}{2} \left\langle {\left[ \phi(x, \theta) -
        \phi(x, \theta^*) \right]}^2 \right\rangle,
\end{equation}
where the angled brackets $\langle \cdot \rangle$ denote an average over the
input distribution. We can make progress by realising that
$\epsilon_g(\theta^*, \theta)$ can be expressed as a function of a set of
macroscopic variables, called \emph{order parameters} in statistical
physics,\cite{Kinzel1990,Biehl1995,Saad1995a}
\begin{equation}
  \label{eq:order-parameters}
  Q^\mu_{ik} \equiv \frac{w^\mu_i w^\mu_k}{N},  \quad R^\mu_{in} \equiv \frac{w^\mu_i w^*_n}{N}
  \quad \mathrm{and} \quad T_{nm} \equiv \frac{w^*_n w^*_m}{N},
\end{equation}
together with the second-layer weights $v^*$ and $v^\mu$. Intuitively, the
teacher-student overlaps $R^\mu=[R_{in}^\mu]$ measure the similarity between the
weights of the $i$th student node and the $n$th teacher node. The matrix
$Q_{ik}$ quantifies the overlap of the weights of different student nodes with
each other, and the corresponding overlap of the teacher nodes are collected in
the matrix $T_{nm}$. We will find it convenient to collect all order parameters
in a single vector
\begin{equation}
  \label{eq:m}
  m^\mu \equiv (R^\mu, Q^\mu, T, v^*, v^\mu),
\end{equation}
and we write the full expression for $\epsilon_g(m^\mu)$ in
Eq.~\eqref{eq:supp_eg-order-parameters}. 

In a series of classic papers, Biehl, Schwarze, Saad, Solla and
Riegler~\cite{Biehl1995,Saad1995a,Saad1995b,Riegler1995,Saad1997} derived a
closed set of ordinary differential equations for the time evolution of the
order parameters $m$ (see SM Sec.~\ref{sec:supp_dynamics}). Together with the
expression for the generalisation error $\epsilon_g(m^\mu)$, these equations
give a complete description of the generalisation dynamics of the student, which
they analysed for the special case $K=M$ when only the first layer is
trained~\cite{Saad1995b,Saad1997}. Our first contribution is to provide a
rigorous foundation for these results under the following assumptions:

\begin{description}
\item[(A1)] Both the sequences $x^\mu$ and $\zeta^\mu$, $\mu=1,2,\ldots$, are
  i.i.d.\ random variables; $x^\mu$ is drawn from a normal distribution with
  mean 0 and covariance matrix $\mathbb{I}_N$, while $\zeta^\mu$ is a Gaussian
  random variable with mean zero and unity variance;
\item[(A2)] The function $g(x)$ is bounded and its derivatives up to and
  including the second order exist and are bounded, too;
\item[(A3)] The initial macroscopic state $m^0$ is deterministic and bounded by
  a constant;
\item[(A4)] The constants $\sigma$, $K$, $M$, $\eta_w$ and $\eta_v$ are all
  finite.
\end{description}

The correctness of the ODE description is then established by the following
theorem: 
\begin{theorem}
  \label{theorem}
  Choose $T\!>\!0$ and define $\alpha\equiv\mu/N$. Under assumptions (A1) --
  (A4), and for any $\alpha>0$, the macroscopic state $m^\mu$ satisfies
  \begin{equation}
    \label{eq:3}
    \max_{0\le\mu\le NT} \EE \;||m^\mu - m(\alpha) || \le \frac{C(T)}{\sqrt{N}}\, ,
  \end{equation}
  where $C(T)$ is a constant depending on $T$, but not on $N$, and $m(\alpha)$
  is the unique solution of the ODE
  \begin{equation}
    \label{eq:4}
    \frac{\dd}{\dd t} m(\alpha) = f(m(\alpha))
  \end{equation}
  with initial condition $m^*$.  In particular, we have
  \begin{subequations}
    \label{eq:eom}
    \begin{alignat}{2}
      \frac{\dd R_{in}}{\dd \alpha} &\equiv f_R(m(\alpha)) &&= \eta v_i \langle
      \Delta
      g'(\lambda_i) \rho_n \rangle\, ,\label{eq:eomR}\\
      \frac{\dd Q_{ik}}{\dd \alpha} &\equiv f_Q(m(\alpha)) &&= \eta v_i \langle
      \Delta g'(\lambda_i) \lambda_k \rangle + \eta v_k \langle \Delta
      g'(\lambda_k) \lambda_i \rangle \nonumber \\
      & && \qquad + \eta^2 v_i v_k \langle \Delta^2 g'(\lambda_i)
      g'(\lambda_k)\rangle + \eta^2 v_i v_k\sigma^2 \langle
      g'(\lambda_i)g'(\lambda_k) \rangle\, ,\label{eq:eomQ}\\
      \frac{\dd v_i}{\dd \alpha} & \equiv f_v(m(\alpha)) && = \eta_v \langle \Delta
      g(\lambda_i) \rangle. \label{eq:eomv}
    \end{alignat}
  \end{subequations}
  where all $f(m(\alpha))$ are uniformly Lipschitz continuous in $m(\alpha)$. We
  are able to close the equations because we can express averages in
  Eq.~\eqref{eq:eom} in terms of only $m(\alpha)$.
\end{theorem}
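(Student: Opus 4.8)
The plan is to recognise \eqref{eq:3}--\eqref{eq:eom} as a quantitative mean-field / stochastic-approximation statement: the order parameters $m^\mu$ form a Markov chain whose increments are $O(1/N)$, and one must show that this chain shadows the solution of its deterministic drift ODE up to an $O(1/\sqrt N)$ error over the time horizon $\alpha\in[0,T]$. The argument has three phases: (i) derive the exact stochastic recursion for $m^\mu$ and identify its conditional drift; (ii) decompose each increment into drift, a martingale fluctuation, and a subleading remainder, and bound the latter two; (iii) compare to the ODE via a discrete Gronwall inequality.

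First I would substitute the SGD updates \eqref{eq:sgdw}--\eqref{eq:sgdv} into the definitions \eqref{eq:order-parameters}. For instance, expanding $Q_{ik}^{\mu+1}=w_i^{\mu+1}w_k^{\mu+1}/N$ and using $w_i^\mu x^\mu=\sqrt N\,\lambda_i^\mu$ yields a drift of order $1/N$ plus a quadratic self-interaction term proportional to $(\|x^\mu\|^2/N)$; the recursions for $R_{in}$ and $v_i$ expand similarly, with $T$ constant. The decisive structural fact is that, conditionally on the full microscopic state $X^\mu$, the local fields $(\lambda_i^\mu,\rho_n^\mu)$ are jointly Gaussian, because $x^\mu\sim\mathcal N(0,\mathbb I_N)$ and the fields are linear in $x^\mu$, with covariance given \emph{exactly} by $(Q^\mu,R^\mu,T)$. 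Hence the conditional expectation of every increment is a Gaussian integral that depends on $X^\mu$ only through $m^\mu$; this is the closure property asserted at the end of the theorem, and it lets us define $f(m^\mu)\equiv N\,\mathbb{E}[\,m^{\mu+1}-m^\mu\mid X^\mu]$ and verify that its components are precisely the right-hand sides of \eqref{eq:eomR}--\eqref{eq:eomv}.

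Second I would write $m^{\mu+1}-m^\mu=\tfrac1N f(m^\mu)+\tfrac1N\xi^\mu+\tfrac1N r^\mu$, where $\xi^\mu$ is a martingale difference ($\mathbb{E}[\xi^\mu\mid X^\mu]=0$) and $r^\mu$ collects subleading corrections, notably the contribution of $\|x^\mu\|^2/N-1$ and the $O(N^{-1/2})$ diagonal terms in the $Q$-update. Assumption (A2) is what makes this work: boundedness of $g,g',g''$ forces all increments, and therefore the conditional second moments of $\xi^\mu$ and the sizes of $r^\mu$, to be bounded uniformly, giving $\mathbb{E}\|\xi^\mu\|^2\le c$ and $\mathbb{E}\|r^\mu\|\le c/\sqrt N$. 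Using (A2) together with (A4) one also checks that the Gaussian integrals defining $f$ depend smoothly on the covariance entries, which remain in a bounded region, so $f$ is uniformly Lipschitz with some constant $L$; Picard--Lindel\"of then gives existence and uniqueness of the ODE solution $m(\alpha)$.

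Third I would compare to the ODE. Setting $\alpha_\mu=\mu/N$ and $d^\mu\equiv\mathbb{E}\|m^\mu-m(\alpha_\mu)\|$, I would telescope the difference, using $m(\alpha_{\mu+1})-m(\alpha_\mu)=\int_{\alpha_\mu}^{\alpha_{\mu+1}}f(m(s))\,ds=\tfrac1N f(m(\alpha_\mu))+O(1/N^2)$. The Lipschitz bound controls $\tfrac1N\|f(m^j)-f(m(\alpha_j))\|\le \tfrac{L}{N}d^j$; the accumulated martingale obeys $\mathbb{E}\big\|\tfrac1N\sum_{j<\mu}\xi^j\big\|\le\tfrac1N(\mu c)^{1/2}\le\sqrt{cT/N}$; the remainder contributes $\tfrac1N\sum_{j<\mu}\mathbb{E}\|r^j\|\le cT/\sqrt N$; and the Euler discretisation error is $O(1/N)$. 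These combine into $d^\mu\le \tfrac{L}{N}\sum_{j<\mu}d^j+\tfrac{C_0(T)}{\sqrt N}$, and the discrete Gronwall lemma gives $d^\mu\le\tfrac{C_0(T)}{\sqrt N}\,e^{L\mu/N}\le\tfrac{C(T)}{\sqrt N}$ for all $\mu\le NT$, which is \eqref{eq:3}. I expect the main obstacle to be exactly this fluctuation/remainder control: showing that the martingale, summed over $O(N)$ steps of size $O(1/N)$, contributes only $O(1/\sqrt N)$, and that the non-martingale remainders (from the concentration of $\|x^\mu\|^2$ and the diagonal self-overlaps) do not accumulate beyond $O(1/\sqrt N)$. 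This is where (A2) is indispensable, since unbounded activations would spoil the moment bounds; a secondary point is establishing the a priori bound keeping $m^\mu$ in a compact set up to time $NT$ so that the uniform Lipschitz constant for $f$ is actually valid along the trajectory.
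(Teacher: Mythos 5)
Your proposal follows essentially the same route as the paper: identifying the conditional drift via the joint Gaussianity of the local fields (which closes the equations in $m^\mu$) and establishing the $O(N^{-3/2})$ bound on the drift error together with the $O(N^{-2})$ bound on the increment variance are precisely the content of the paper's Lemmas~\ref{l:mean} and~\ref{l:variance}, including the a priori bound on $v_k^\mu$ that you flag as a secondary point. The only difference is in the final assembly, and it is minor: you run a direct martingale-plus-discrete-Gronwall comparison with the Euler scheme, whereas the paper packages the same accumulation argument as the coupling trick of Wang et al., introducing the intermediate processes $b^\mu$ and $d^\mu$ and combining three triangle-inequality bounds to reach the same $O(N^{-1/2})$ rate.
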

We prove Theorem~\ref{theorem} using the theory of convergence of stochastic
processes and a coupling trick introduced recently by Wang et
al.~\cite{Wang2018} in Sec.~\ref{sec:proof} of the SM. The content of the
theorem is illustrated in Fig.~1b, where we plot $\epsilon_g(\alpha)$ obtained
by numerically integrating~\eqref{eq:eom} (solid) and from a single run of
SGD~\eqref{eq:sgdw} (crosses) for sigmoidal students and varying $K$, which are
in very good agreement.

\begin{figure}[t!]
  \centering
  \begin{subfigure}[c]{0.48\textwidth}
    \centering
    \includegraphics[width=0.9\textwidth]{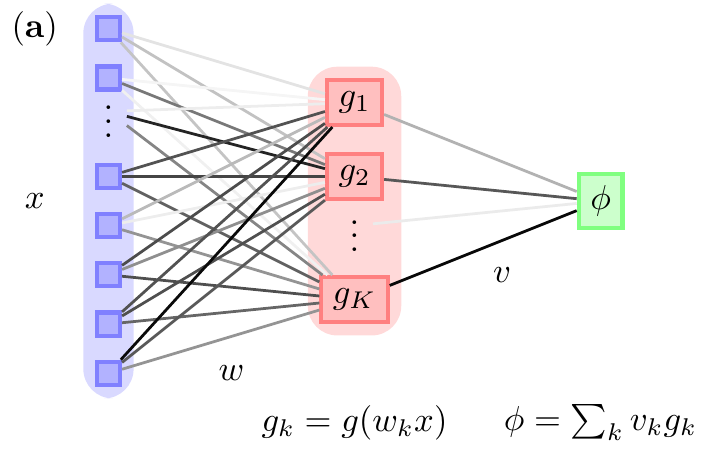}
  \end{subfigure}%
  \hfill%
  \begin{subfigure}[c]{0.48\textwidth}
    \centering
    \includegraphics[width=\linewidth]{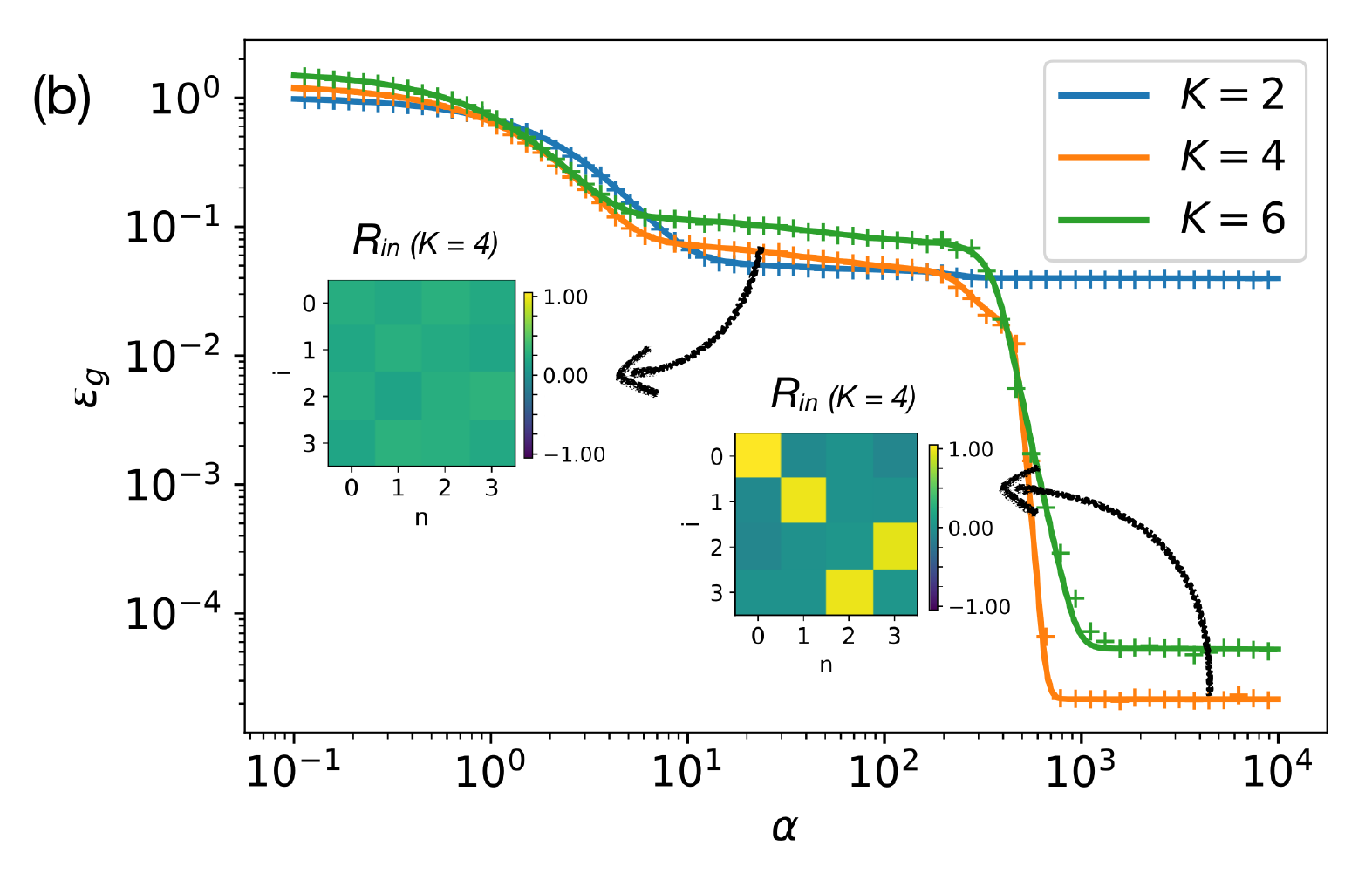}
  \end{subfigure}
  \caption{\textbf{The analytical description of the generalisation dynamics of
      sigmoidal networks matches experiments.} \textbf{(a)} We consider
    two-layer neural networks with a very large input layer. \textbf{(b)}~We
    plot the learning dynamics $\epsilon_g(\alpha)$ obtained by integration of
    the ODEs~\eqref{eq:eom} (solid) and from a single run of SGD~\eqref{eq:sgdw}
    (crosses) for students with different numbers of hidden units $K$. The
    insets show the values of the teacher-student overlaps
    $R_{in}$~\eqref{eq:order-parameters} for a student with $K=4$ at the two
    times indicated by the arrows.  $N=784, M=4, \eta = 0.2$.}
\end{figure}

Given a set of non-linear, coupled ODE such as Eqns.~\eqref{eq:eom}, finding the
asymptotic fixed points analytically to compute the generalisation error would
seem to be impossible. In the following, we will therefore focus on analysing
the asymptotic fixed points found by numerically integrating the equations of
motion. The form of these fixed points will reveal a drastically different
dependence of the test error on the over-parameterisation of neural networks
with different activation functions in the different setups we consider, despite
them all being trained by SGD. This highlights the fact that good generalisation
goes beyond the properties of \emph{just} the algorithm. Second, knowledge of
these fixed points allows us to make analytical and quantitative predictions for
the asymptotic performance of the networks which agree well with experiments. We
also note that several recent theorems~\cite{Mei2018,Chizat2018,Rotskoff2018}
about the global convergence of SGD do not apply in our setting because we have
a finite number of hidden units.

\section{Asymptotic generalisation error of Soft Committee machines}
\label{sec:final-eg}

We will first study networks where the second layer weights are fixed at
$v^*_m=v_k=1$. These networks are called a \emph{Soft Committee Machine} (SCM)
in the statistical physics
literature~\cite{Biehl1995,Saad1995a,Saad1995b,Saad1997,Engel2001,Aubin2018}.
One notable feature of $\epsilon_g(\alpha)$ in SCMs is the existence of a long
plateau with sub-optimal generalisation error during training. During this
period, all student nodes have roughly the same overlap with all the teacher
nodes, $R_{in}=\mathrm{const.}$ (left inset in Fig.~1b). As training continues,
the student nodes ``specialise'' and each of them becomes strongly correlated
with a single teacher node (right inset), leading to a sharp decrease in
$\epsilon_g$.  This effect is well-known for both batch and online
learning~\cite{Engel2001} and will be key for our analysis.

Let us now use the equations of motion~\eqref{eq:eom} to analyse the asymptotic
generalisation error of neural networks~$\epsilon_g^*$ after training has
converged and in particular its scaling with $L=K-M$. Our first contribution is to
reduce the remaining $K(K + M)$ equations of motion to a set of eight coupled
differential equations for any combination of $K$ and $M$ in
Sec.~\ref{sec:supp_eg_analytical}. This enables us to obtain a closed-form
expression for $\epsilon_g^*$ as follows.

In the absence of output noise ($\sigma=0$), the generalisation error of a
student with $K\ge M$ will asymptotically tend to zero as
$\alpha\!\to\!\infty$. On the level of the order parameters, this corresponds to
reaching a stable fixed point of~\eqref{eq:eom} with $\epsilon_g=0$. In the
presence of small output noise $\sigma>0$, this fixed point becomes unstable and
the order parameters instead converge to another, nearby fixed point $m^*$ with
$\epsilon_g(m^*)>0$. The values of the order parameters at that fixed point can
be obtained by perturbing Eqns.~\eqref{eq:eom} to first order in $\sigma$, and
the corresponding generalisation error $\epsilon_g(m^*)$ turns out to be in
excellent agreement with the generalisation error obtained when training a
neural network using~\eqref{eq:sgdw} from random initial conditions, which we
show in Fig.~2a.

\paragraph{Sigmoidal networks.} \label{sec:sigmoidal-network} We have performed
this calculation for teacher and student networks with
$g(x)=\erf(x/\sqrt{2})$. We relegate the details to Sec.~\ref{sec:perturbation},
and content us here to state the asymptotic value of the generalisation error to
first order in~$\sigma^2$,
\begin{equation}
  \label{eq:egFinal}
  \epsilon_g^* = \frac{\sigma^2 \eta}{2 \pi} f(M, L, \eta) + \mathcal{O}(\sigma^3),
\end{equation}
where $f(M, L, \eta)$ is a lengthy rational function of its variables. We plot
our result in Fig.~2a together with the final generalisation error obtained in a
single run of SGD~\eqref{eq:sgdw} for a neural network with initial weights
drawn i.i.d.\ from $\mathcal{N}(0, 1)$ and find excellent agreement, which we
confirmed for a range of values for $\eta$, $\sigma$, and $L$.

\begin{figure}
  \centering
  \begin{subfigure}[b]{0.48\textwidth}
    \centering
    \includegraphics[width=.9\linewidth]{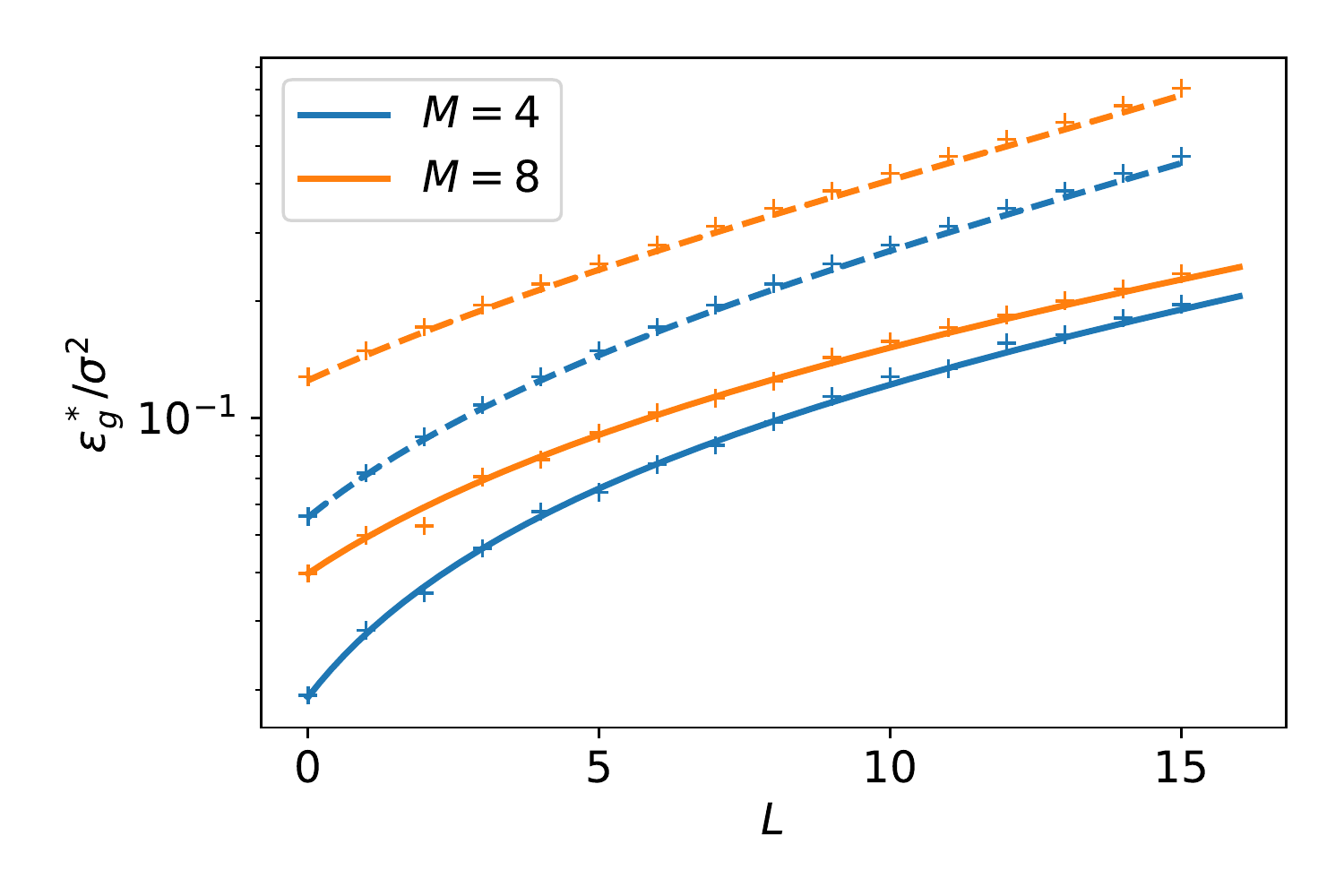}
  \end{subfigure}%
  \hfill%
  \begin{subfigure}[b]{0.48\textwidth}
    \centering
    \includegraphics[width=.9\linewidth]{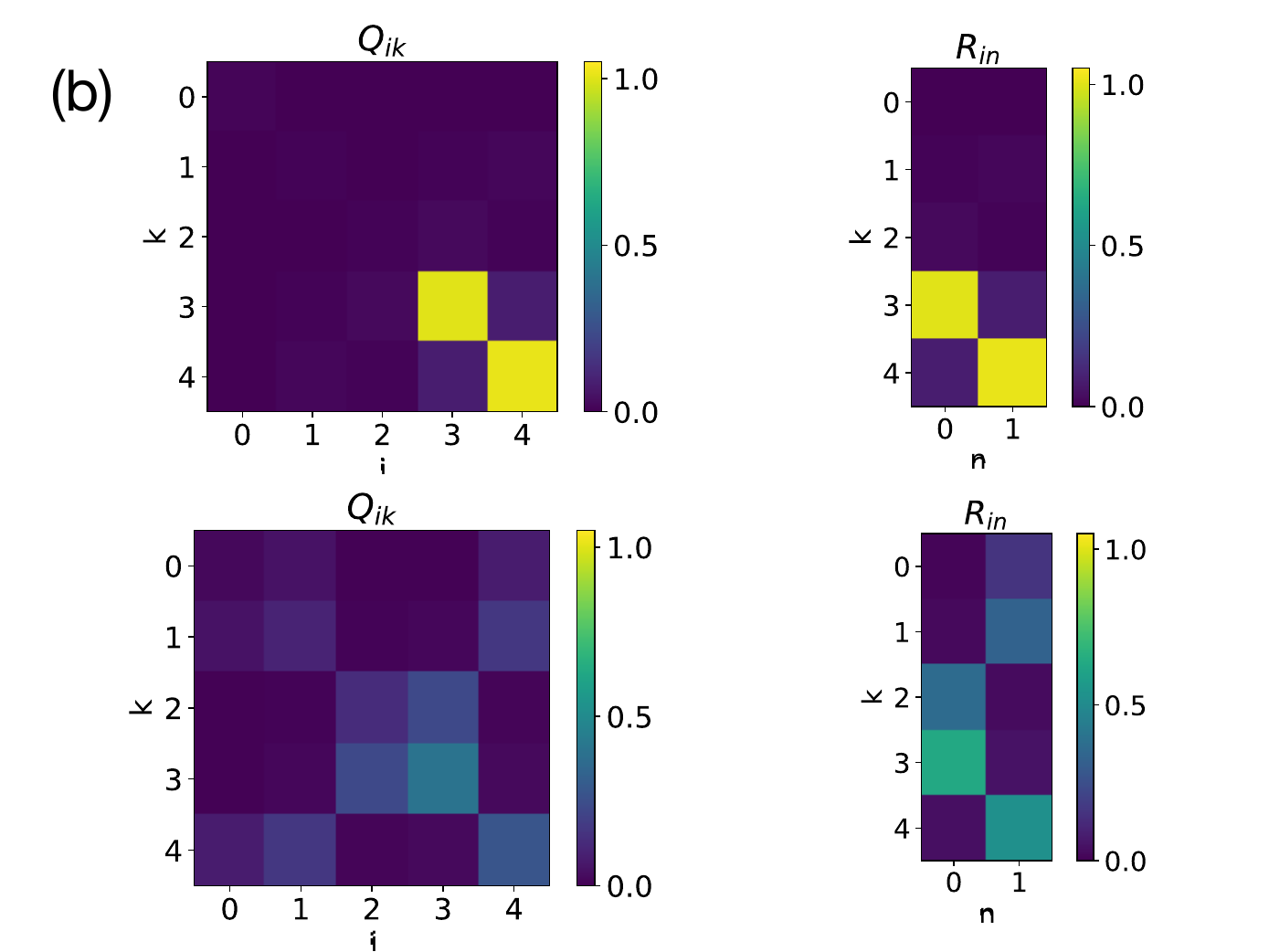}%
  \end{subfigure}
  \caption{ \textbf{The asymptotic generalisation error of Soft Committee
      Machines increases with the network size.}
    $N=784, \eta=0.05, \sigma=0.01$. \textbf{(a)} Our theoretical prediction for
    $\epsilon_g^*/\sigma^2$ for sigmoidal (solid) and linear (dashed),
    Eqns.~\eqref{eq:egFinal} and~\eqref{eq:eg-lin}, agree perfectly with the result
    obtained from a single run of SGD~\eqref{eq:sgdw} starting from random
    initial weights (crosses). \textbf{(b)} The final overlap matrices $Q$ and
    $R$~\eqref{eq:order-parameters} at the end of an experiment with $M=2,
    K=5$. Networks with sigmoidal activation function (top) show clear signs of
    specialisation as described in Sec.~\ref{sec:sigmoidal-network}. ReLU
    networks (bottom) instead converge to solutions where all of the student's
    nodes have finite overlap with teacher nodes.}
  \label{fig:first-layer}
\end{figure}

One notable feature of Fig.~2a is that with all else being equal, SGD alone
fails to regularise the student networks of increasing size in our setup,
instead yielding students whose generalisation error increases linearly
with~$L$. One might be tempted to mitigate this effect by simultaneously
decreasing the learning rate $\eta$ for larger students. However, lowering the
learning rate incurs longer training times, which requires more data for online
learning. This trade-off is also found in statistical learning theory, where
models with more parameters (higher $L$) and thus a higher complexity class
(\emph{e.g.} VC dimension or Rademacher complexity~\cite{Mohri2012}) generalise
just as well as smaller ones when given more data. In practice, however, more
data might not be readily available, and we show in Fig.~\ref{fig:eg_erf_lr_rescaled} of the  SM that even when
choosing $\eta=1/K$, the generalisation error still increases with $L$ before
plateauing at a constant value.

We can gain some intuition for the scaling of $\epsilon_g^*$ by considering the
asymptotic overlap matrices $Q$ and $R$ shown in the left half of Fig.~2b. In
the over-parameterised case, $L=K-M$ student nodes are effectively trying to
specialise to teacher nodes which do not exist, or equivalently, have weights
zero. 
These $L$ student nodes do not carry any information about the teachers output,
but they pick up fluctuations from output noise and thus increase
$\epsilon_g^*$. This intuition is borne out by an expansion of $\epsilon_g^*$ in
the limit of small learning rate $\eta$, which yields
  \begin{equation}
    \label{eq:egFinal1stOrderInLr} \epsilon_g^* = \frac{\sigma^2 \eta}{2 \pi}
    \left(L + \frac{M}{\sqrt{3}} \right) + \mathcal{O}(\eta^2),
\end{equation}
which is indeed the sum of the error of $M$ independent hidden units that are
specialised to a single teacher hidden unit, and $L=K-M$ superfluous units
contributing each the error of a hidden unit that is ``learning'' from a hidden
unit with zero weights $w^*_m=0$ (see also Sec.~\ref{sec:cp} of the SM).

\paragraph{Linear networks.}\label{sec:linear-networks}
Two possible explanations for the scaling $\epsilon_g^*\sim L$ in sigmoidal
networks may be the specialisation of the hidden units or the fact that teacher
and student network can implement functions of different range if $K \neq M$. To
test these hypotheses, we calculated $\epsilon_g^*$ for linear neural
networks~\cite{Krogh1992a,Saxe2014} with $g(x)=x$. Linear networks lack a
specialisation transition~\cite{Aubin2018} and their output range is set by the
magnitude of their weights, rather than their number of hidden units. Following
the same steps as before, a perturbative calculation in the limit of small noise
variance $\sigma^2$ yields
\begin{equation}
  \label{eq:eg-lin}
  \epsilon_g^* = \frac{\eta  \sigma ^2 (L+M)}{4 - 2\eta(L+M)} + \mathcal{O}(\sigma^3).
\end{equation}
This result is again in perfect agreement with experiments, as we demonstrate in
Fig.~2a. In the limit of small learning rates $\eta$,
Eq.~\eqref{eq:egFinal} simplifies to yield the same scaling as for sigmoidal
networks,
\begin{equation}
  \label{eq:eg-lin-smallLr}
  \epsilon_g^* = \frac{1}{4} \eta  \sigma ^2 (L+M)+\mathcal{O}\left(\eta
    ^2\right).
\end{equation}

This shows that the scaling $\epsilon_g^* \sim L$ is not just a consequence of
either specialisation or the mismatched range of the networks' output
functions. The optimal number of hidden units for linear networks is $K=1$ for
all $M$, because linear networks implement an effective linear transformation
with an effective matrix $W=\sum_k w_k$. Adding hidden units to a linear network
hence does not augment the class of functions it can implement, but it adds
redundant parameters which pick up fluctuations from the teacher's output noise,
increasing $\epsilon_g$.

\paragraph{ReLU networks.} \label{sec:relu} The analytical calculation of
$\epsilon^*_g$, described above, for ReLU networks poses some additional technical
challenges, so we resort to experiments to investigate this case. We found that
the asymptotic generalisation error of a ReLU student learning from a ReLU
teacher has the same scaling as the one we found analytically for networks with
sigmoidal and linear activation functions: $\epsilon_g^* \sim \eta \sigma^2 L$
(see Fig.~\ref{fig:eg_relu_scaling}). Looking at the final overlap matrices $Q$
and $R$ for ReLU networks in the bottom half of Fig.~2b, we see that instead of
the one-to-one specialisation of sigmoidal networks, all student nodes have a
finite overlap with some teacher node. This is a consequence of the fact that it
is much simpler to re-express the sum of $M$ ReLU units with $K\neq M$ ReLU
units. However, there are still a lot of redundant degrees of freedom in the
student, which all pick up fluctuations from the teacher's output noise and
increase~$\epsilon_g^*$.

\paragraph{Discussion.}\label{sec:discussion} The key result of this section has
been that the generalisation error of SCMs scales as
\begin{equation}
  \label{eq:1}
  \epsilon_g^* \sim \eta \sigma^2 L.
\end{equation}
Before moving on the full two-layer network, we discuss a number of experiments
that we performed to check the robustness of this result (Details can be found
in Sec.~\ref{sec:supp_add-experiments} of the SM). A standard regularisation
method is adding weight decay to the SGD updates~\eqref{eq:sgdw}. However, we
did not find a scenario in our experiments where weight decay improved the
performance of a student with $L>0$. We also made sure that our results persist
when performing SGD with mini-batches. We investigated the impact of
higher-order correlations in the inputs by replacing Gaussian inputs with MNIST
images, with all other aspects of our setup the same, and the same
$\epsilon_g$-$L$ curve as for Gaussian inputs. Finally, we analysed the impact
of having a finite training set. The behaviour of linear networks and of
non-linear networks with large but finite training sets did not change
qualitatively. However, as we reduce the size of the training set, we found that
the lowest asymptotic generalisation error was obtained with networks that have
$K>M$.


\section{Training both layers: Asymptotic generalisation error of a neural network}
\label{sec:both}
We now study the performance of two-layer neural networks when both layers are
trained according to the SGD updates~\eqref{eq:sgdw} and~\eqref{eq:sgdv}. 
We set all the teacher weights equal to a constant value, $v_m^*=v^*$, to ensure
comparability between experiments. However, we train all $K$ second-layer
weights of the student independently and do not rely on the fact that all
second-layer teacher weights have the same value. Note that learning the second
layer is not needed from the point of view of statistical learning: the networks
from the previous section are already expressive enough to capture the students,
and we are thus slightly increasing the over-parameterisation even further. Yet,
we will see that the generalisation properties will be significantly enhanced.

\begin{figure}[t!]
  \centering
  \begin{subfigure}[b]{0.33\textwidth}
    \centering
    \includegraphics[width=\textwidth]{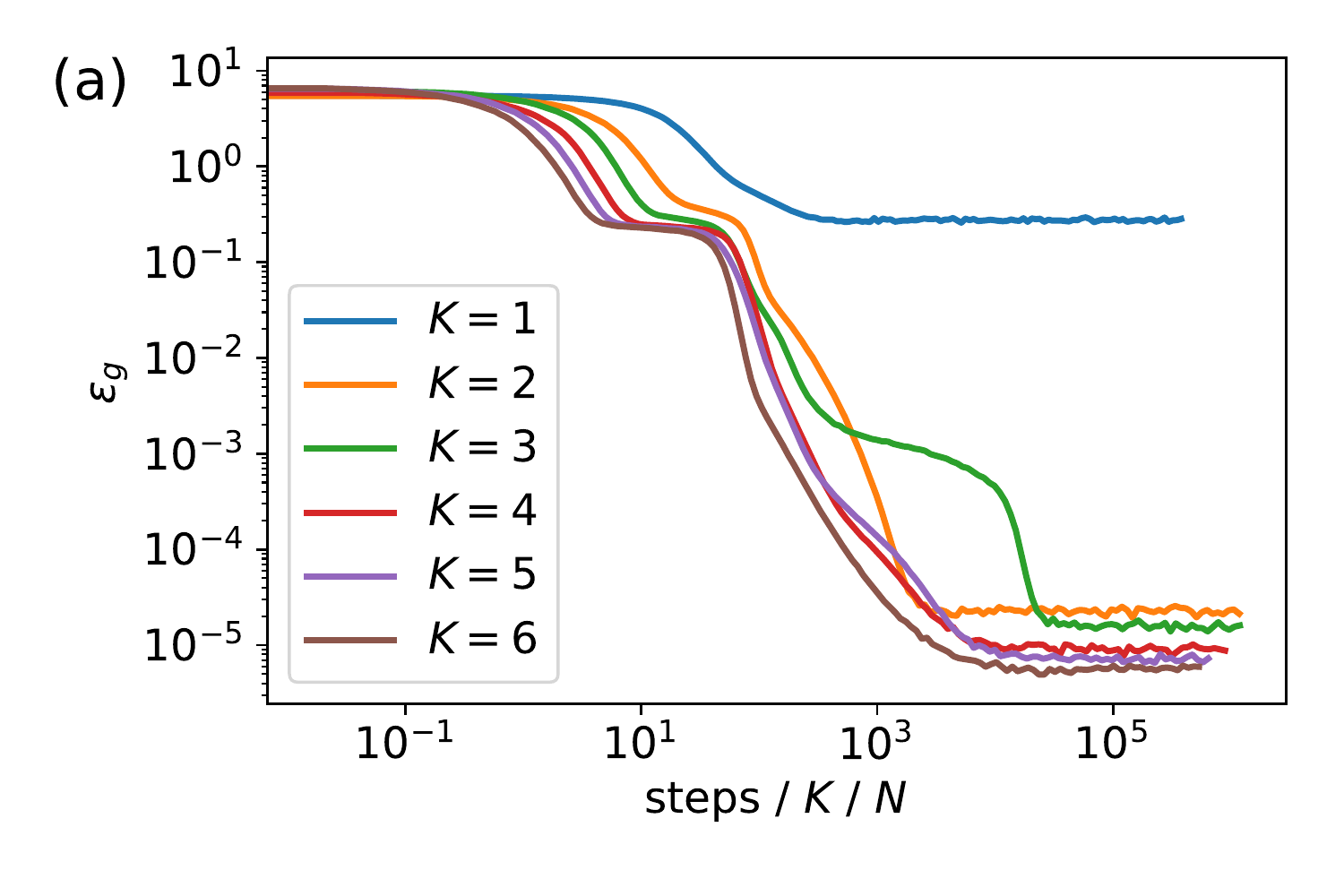}
  \end{subfigure}%
  \begin{subfigure}[b]{0.33\textwidth}
    \centering
    \includegraphics[width=\textwidth]{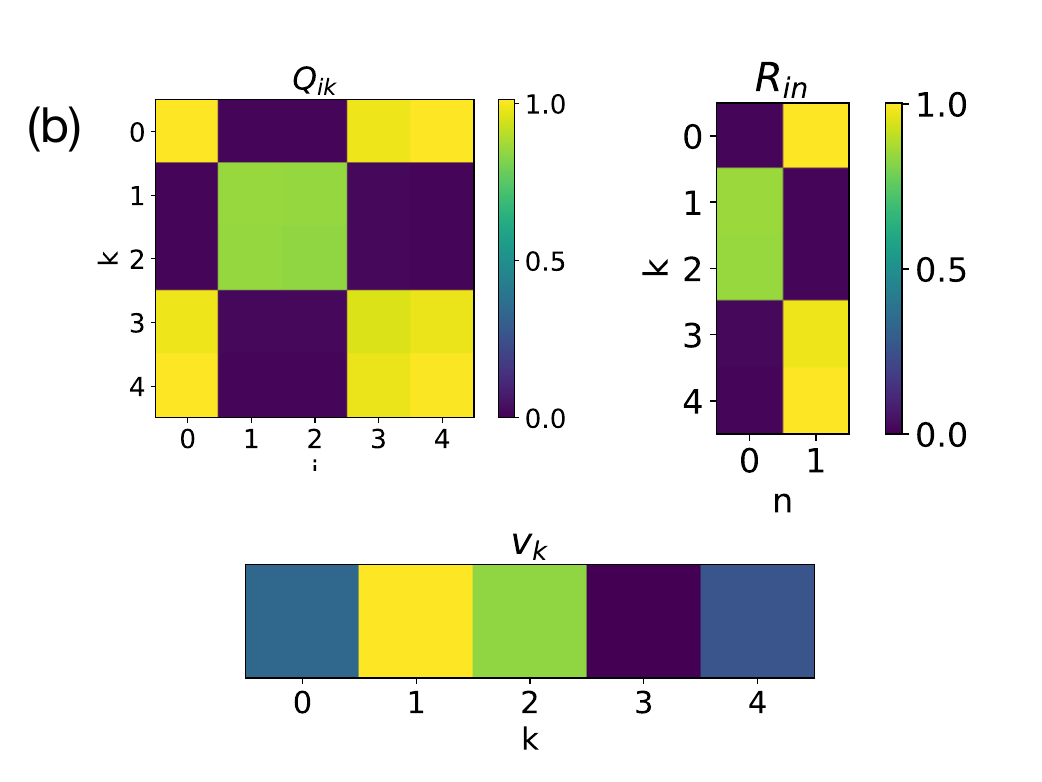}
  \end{subfigure}%
  \begin{subfigure}[b]{0.33\textwidth}
    \centering
    \includegraphics[width=\textwidth]{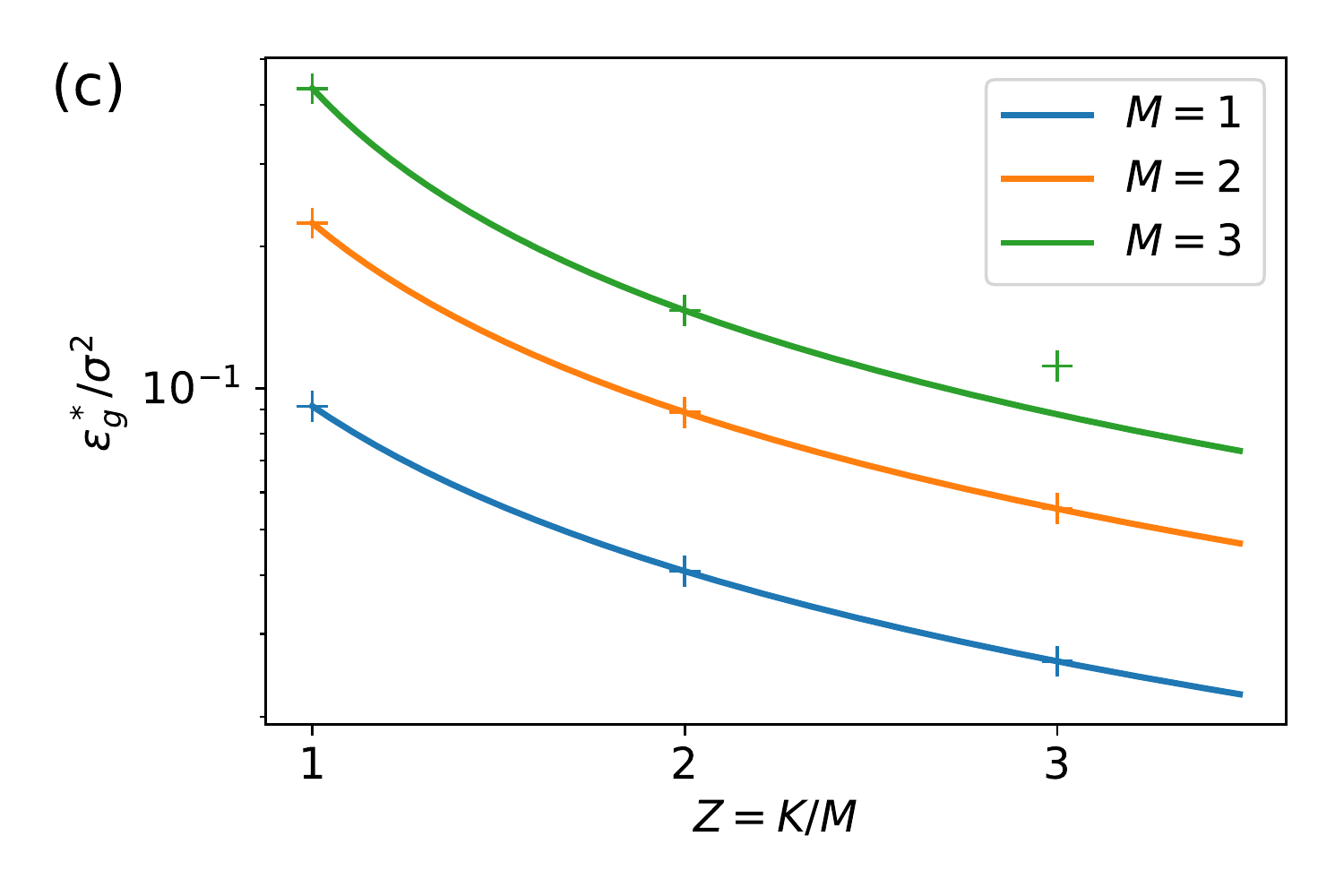}
  \end{subfigure}%
  \caption{\textbf{The performance of sigmoidal networks improves with network
      size when training both layers with SGD.} \textbf{(a)} Generalisation
    dynamics observed experimentally for students with increasing $K$, with all
    other parameters being equal. ($N=500, M=2, \eta=0.05, \sigma=0.01,
    v^*=4$). \textbf{(b)} Overlap matrices $Q$, $R$, and second layer weights
    $v_k$ of the student at the end of the run with $K=5$ shown in
    (a). \textbf{(c)} Theoretical prediction for $\epsilon_g^*$ (solid) against
    $\epsilon_g^*$ observed after integration of the ODE until convergence
    (crosses)~\eqref{eq:eom} ($\sigma=0.01, \eta=0.2, v^*=2$).}
\end{figure}

\paragraph{Sigmoidal networks.}

We plot the generalisation dynamics of students with increasing $K$ trained on a
teacher with $M=2$ in Fig.~3a. Our first observation is that increasing the
student size~$K\ge M$ \emph{decreases} the asymptotic generalisation error
$\epsilon_g^*$, with all other parameters being equal, in stark contrast to the
SCMs of the previous section.

A look at the order parameters after convergence in the experiments from Fig.~3a
reveals the intriguing pattern of specialisation of the student's hidden units
behind this behaviour, shown for $K=5$ in Fig.~3b. First, note that all the
hidden units of the student have non-negligible weights ($Q_{ii}>0$). Two
student nodes ($k=1,2$) have specialised to the first teacher node, \emph{i.e.}
their weights are very close to the weights of the first teacher node
($R_{10}\approx R_{20} \approx 0.85$). The corresponding second-layer weights
approximately fulfil $v_1 + v_3\approx v^*$. Summing the output of these two
student hidden units is thus approximately equivalent to an empirical average of
two estimates of the output of the teacher node. The remaining three student
nodes all specialised to the second teacher node, and their outgoing weights
approximately sum to $v^*$. This pattern suggests that SGD has found a set of
weights for both layers where the student's output is a weighted average of
several estimates of the output of the teacher's nodes. We call this the {\it
  denoising solution} and note that it resembles the solutions found in the
mean-field limit of an infinite hidden layer~\cite{Mei2018,Chizat2018} where the
neurons become redundant and follow a distribution dynamics (in our case, a
simple one with few peaks, as e.g. Fig. 1 in \cite{Chizat2018}).

We confirmed this intuition by using an ansatz for the order parameters that
corresponds to a denoising solution to solve the equations of
motion~\eqref{eq:eom} perturbatively in the limit of small noise to calculate
$\epsilon_g^*$ for sigmoidal networks after training both layers, similarly to
the approach in Sec.~\ref{sec:final-eg}. While this approach can be extended to
any $K$ and $M$, we focused on the case where $K = Z M$ to obtain manageable
expressions; see Sec.~\ref{sec:supp_eg_two-layer} of the SM for details on the
derivation. While the final expression is again too long to be given here, we
plot it with solid lines in Fig.~3c. The crosses in the same plot are the
asymptotic generalisation error obtained by integration of the
ODE~\eqref{eq:eom} starting from random initial conditions, and show very good
agreement.

While our result holds for any $M$, we note from Fig.~3c that the curves for
different $M$ are qualitatively similar. We find a particular simple result for
$M=1$ in the limit of small learning rates, where:
\begin{equation}
  \epsilon_g^* = \frac{\eta(\sigma v^*)^2}{2 \sqrt{3} K \pi} + \mathcal{O}(\eta \sigma^2)\, .
\end{equation}
This result should be contrasted with the $\epsilon_g \sim K$ behaviour found
for SCM.

Experimentally, we robustly observed that training both layers of the network
yields better performance than training only the first layer with the second
layer weights fixed to $v^*$. However, convergence to the denoising solution can
be difficult for large students which might get stuck on a long plateau where
their nodes are not evenly distributed among the teacher nodes. While it is easy
to check that such a network has a higher value of $\epsilon_g$ than the
denoising solution, the difference is small, and hence the driving force that
pushes the student out of the corresponding plateaus is small, too. These
observations demonstrate that in our setup, SGD does not always find the
solution with the lowest generalisation error in finite time.

\begin{wrapfigure}{R}{0.5\textwidth}
  \vspace{-0.3cm}
  \centering
  \includegraphics[width=0.5\textwidth]{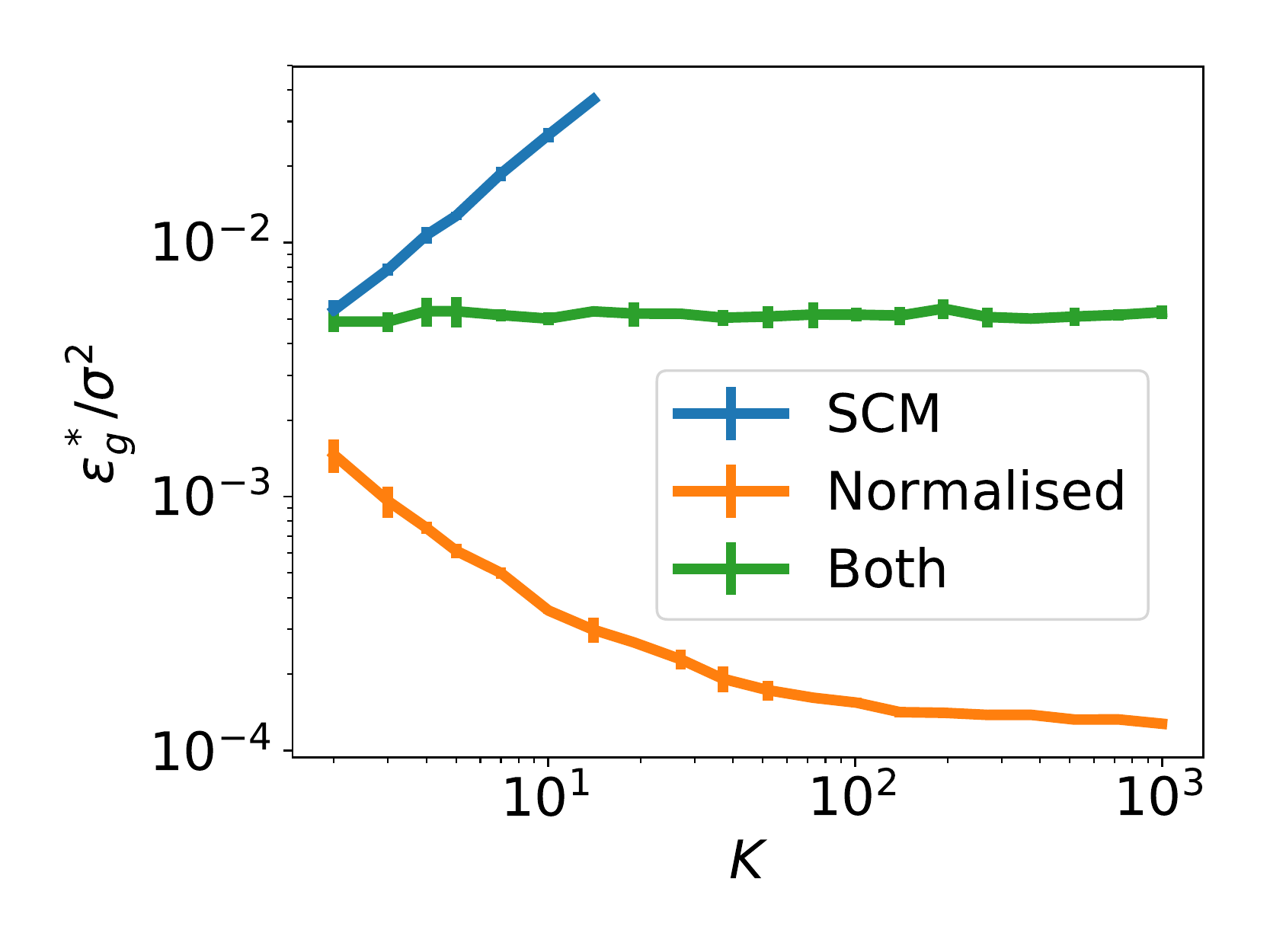}
  \caption{\label{fig:both_linear} Asymptotic performance of linear two layer
    network. Error bars indicate one standard deviation over five
    runs. Parameters: $N=100, M=4, v^*=1, \eta=0.01, \sigma=0.01$.}
\end{wrapfigure}

\paragraph{ReLU and linear networks.}\label{sec:both-linear}
We found experimentally that $\epsilon_g^*$ remains constant with increasing $K$
in ReLU and in linear networks when training both layers. We plot a typical
learning curve in green for linear networks in Fig.~\ref{fig:both_linear}, but
note that the figure shows qualitatively similar features for ReLU networks
(Fig.~\ref{fig:both_linear_and_relu}). This behaviour was also observed in
linear networks trained by \emph{batch} gradient descent, starting from small
initial weights~\cite{Lampinen2018}. While this scaling of $\epsilon_g^*$ with
$K$ is an improvement over its increase with $K$ for the SCM, (blue curve), this
is not the $1/K$ decay that we observed for sigmoidal networks. A possible
explanation is the lack of specialisation in linear and ReLU networks (see
Sec.~\ref{sec:linear-networks}), without which the denoising solution found in
sigmoidal networks is not possible. We also considered normalised SCM, where we
train only the first layer and fix the second-layer weights at $v^*_m=1/M$ and
$v_k=1/K$. The asymptotic error of normalised SCM decreases with $K$ (orange
curve in Fig.~\ref{fig:both_linear}), because the second-layer weights $v_k=1/K$
effectively reduce the learning rate, as can be easily seen from the SGD
updates~\eqref{eq:sgdw}, and we know from our analysis of linear SCM in
Sec.~\ref{sec:linear-networks} that $\epsilon_g\sim\eta$. In SM
Sec.~\ref{unbalanced} we show analytically how imbalance in the norms of the
first and second layer weights can lead to a larger effective learning
rate. Normalised SCM also beat the performance students where we trained both
layers, starting from small initial weights in both cases. This is surprising
because we checked experimentally that the weights of a normalised SCM after
training are a fixed point of the SGD dynamics when training both
layers. However, we confirmed experimentally that SGD does not find this fixed
point when starting with random initial weights.

\paragraph{Discussion.} The qualitative difference between training both or only
the first layer of neural networks is particularly striking for linear networks,
where fixing one layer does not change the class of functions the model can
implement, but makes a dramatic difference for their asymptotic
performance. This observation highlights two important points: first, the
performance of a network is not just determined by the number of additional
parameters, but also by how the additional parameters are arranged in the
model. Second, the non-linear dynamics of SGD means that changing which weights
are trainable can alter the training dynamics in unexpected ways. We saw this
for two-layer linear networks, where SGD did not find the optimal fixed point,
and in the non-linear sigmoidal networks, where training the second layer
allowed the student to decrease its final error with every additional hidden
unit instead of increasing it like in the SCM.

\section*{Acknowledgements}

SG and LZ acknowledge funding from the ERC under the European Union’s Horizon
2020 Research and Innovation Programme Grant Agreement 714608-SMiLe. MA thanks
the Swartz Program in Theoretical Neuroscience at Harvard University for
support. AS acknowledges funding by the European Research Council, grant 725937
NEUROABSTRACTION. FK acknowledges support from ``Chaire de recherche sur les
modèles et sciences des données'', Fondation CFM pour la Recherche-ENS, and from
the French National Research Agency (ANR) grant PAIL.

\bibliography{scm}

\appendix

\clearpage

\setcounter{equation}{0}
\setcounter{figure}{0}
\setcounter{table}{0}
\setcounter{page}{1}
\makeatletter
\renewcommand{\theequation}{S\arabic{equation}}
\renewcommand{\thefigure}{S\arabic{figure}}

\begin{center}
  \Large Dynamics of stochastic gradient descent for two-layer\\
  neural networks in the teacher-student setup\\[1em]
 \LARGE SUPPLEMENTAL MATERIAL
\end{center}

\tableofcontents

\section{Proof of Theorem~\ref{theorem}}
\label{sec:proof}

\subsection{Outline}

We will prove Theorem~\ref{theorem} in two steps. First, we will show that the
mean values of the order parameters $R_{in}$, $Q_{ik}$ and $v_k$ are given by
the expressions used in the equations of motion (Lemma~\ref{l:mean}) and that
they concentrate, \emph{i.e.} that their variance is bounded by a term of order
$N^{-2}$. This ensures that the leading-order of the average increment is
captured by the ODE of Theorem~\ref{theorem}, and that the stochastic part of
the increment of the order parameters can be ignored in the thermodynamic limit
$N\to\infty$. In other words, the two bounds ensure that the stochastic Markov
process converges to a deterministic process. To complete the proof, we use a
form of the coupling trick as described by Wang et al.~\cite{Wang2018}.

\subsection{First moments of the increment $m^\mu$}
\label{sec:mean-vari-incr}

Throughout this paper, we use the convention that $\EE$ indicates an average
over all the random variables that follow, while $\mathbb{E}_\mu$ denotes the
conditional expectation of all the random variables that follow
\emph{conditioned} on the state of the Markov chain at step $\mu$, $m^\mu$.

\begin{lemma}
  \label{l:mean}
  Under the same setting as Theorem~\ref{theorem}, for all $\mu<N T$, we have
  \begin{equation}
    \label{eq:lemma-mean}
    \EE |\Emu m^\mup - m^\mu - \frac{1}{N}f(m^\mu)| \le C N^{-3/2}.
  \end{equation}
\end{lemma}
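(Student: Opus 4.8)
The plan is to substitute the SGD recursions~\eqref{eq:sgdw}--\eqref{eq:sgdv} directly into the definitions~\eqref{eq:order-parameters} of the order parameters, expand, and take the conditional expectation $\Emu$. The structural fact that makes the whole computation tractable is the online (one-pass) nature of the algorithm together with (A1): the fresh sample $(x^\mu,\zeta^\mu)$ is independent of the current state $m^\mu$, and since the local fields $\lambda_k^\mu = w_k^\mu x^\mu/\sqrt N$ and $\rho_n^\mu = w_n^* x^\mu/\sqrt N$ are \emph{linear} functions of the Gaussian vector $x^\mu$, the vector $(\lambda^\mu,\rho^\mu)$ is, conditionally on $m^\mu$, \emph{exactly} jointly Gaussian with covariance matrix read off from $(Q^\mu,R^\mu,T)$. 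Consequently the conditional expectation of any (bounded) function of the local fields and of $\zeta^\mu$ coincides \emph{exactly} with the Gaussian average $\langle\cdot\rangle$ appearing on the right-hand sides of~\eqref{eq:eom}; this is precisely what lets us identify the drift with $f$ and close the equations.

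First I would treat $R$ and $v$. Because $R_{in}=w_iw_n^*/N$ is linear in $w_i$, and the $v$-update is already of order $1/N$, substituting the recursions produces increments that are exactly of order $1/N$ with no higher-order pieces; taking $\Emu$ and using the exact Gaussianity above reproduces $\tfrac1N f_R(m^\mu)$ and $\tfrac1N f_v(m^\mu)$ identically (up to the sign conventions of~\eqref{eq:eom}), so these components contribute \emph{nothing} to the error term. The sole source of error is therefore the update of $Q_{ik}=w_iw_k/N$, which is \emph{quadratic} in the weights. Expanding the product of two first-layer updates gives, besides two cross terms that reproduce the $\langle\Delta g'(\lambda_i)\lambda_k\rangle$ contributions to $f_Q$, the genuinely quadratic term $\tfrac{\eta_w^2}{N^2}\, v_i^\mu v_k^\mu\, g'(\lambda_i^\mu)g'(\lambda_k^\mu)(\Delta^\mu)^2\,|x^\mu|^2$ with $|x^\mu|^2=\sum_j (x_j^\mu)^2$. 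I then split $|x^\mu|^2 = N + (|x^\mu|^2-N)$: the deterministic part yields $\tfrac{\eta_w^2}{N}v_i^\mu v_k^\mu\,\Emu[(\Delta^\mu)^2 g'(\lambda_i^\mu)g'(\lambda_k^\mu)]$, and averaging over $\zeta^\mu$ (mean zero, unit variance, independent of $x^\mu$) splits this into exactly the two $\eta^2$ contributions to $f_Q$, the $\sigma^2$ term arising from $\Emu(\sigma\zeta^\mu)^2=\sigma^2$. This is where the full drift $f$ is recovered.

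It then remains to bound the fluctuation piece $\tfrac{\eta_w^2}{N^2}v_i^\mu v_k^\mu\,\Emu\big[g'(\lambda_i^\mu)g'(\lambda_k^\mu)(\Delta^\mu)^2(|x^\mu|^2-N)\big]$. By Cauchy--Schwarz, together with (A2) (so $g'$ is bounded and $(\Delta^\mu)^2$ has bounded conditional moments) and the elementary identity $\mathrm{Var}(|x^\mu|^2)=2N$, this conditional expectation is $O(\sqrt N)$, so the whole term is $O(N^{-3/2})$. A finer decomposition of $x^\mu$ into its projection onto $\mathrm{span}(w_i^\mu,w_k^\mu,\dots)$ and the orthogonal complement would even give $O(N^{-2})$, since $|x^\mu|^2-N$ is dominated by the directions independent of the local fields, but $N^{-3/2}$ is all that is needed downstream (summed over the $\sim NT$ steps it produces the $N^{-1/2}$ rate of Theorem~\ref{theorem}). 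As $K,M$ are finite by (A4), there are only finitely many components, so the norm of the increment error is $O(N^{-3/2})$ pointwise in $m^\mu$.

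The main obstacle — the one step requiring genuine care — is passing from this pointwise (in $m^\mu$) estimate to the stated bound on $\EE|\cdots|$ with a constant $C$ that is \emph{uniform} over $0\le\mu\le NT$. The coefficients above contain $v_i^\mu v_k^\mu$ and, through $\Delta^\mu$, polynomials in $v^\mu$, so I must first control the moments of $m^\mu$ up to time $NT$. Using boundedness of $g$ (A2), the $v$-recursion gives $\|v^{\mup}\|\le (1+c/N)\|v^\mu\| + c/N$, and the increments of $Q,R$ are bounded similarly; a discrete Gr\"onwall argument then yields $\EE\,\mathrm{poly}(\|m^\mu\|)\le C(T)$ for all $\mu\le NT$, with $C(T)$ independent of $N$. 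Taking the outer expectation $\EE$ and inserting these moment bounds converts the pointwise estimate into~\eqref{eq:lemma-mean}. I emphasise that it is exactly assumption (A2) that keeps every Gaussian average and every coefficient finite throughout; the ReLU and linear activations studied elsewhere in the paper violate boundedness and would fall outside the hypotheses of this lemma.
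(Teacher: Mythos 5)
Your proof follows essentially the same route as the paper's: substitute the SGD recursion into the definitions of $R$, $Q$ and $v$, use the exact conditional Gaussianity of the local fields to identify the drift with $\tfrac1N f(m^\mu)$, and observe that the only non-exact contribution comes from the $|x^\mu|^2/N$ factor in the quadratic term of the $Q$-update. If anything you are more careful than the paper, which merely asserts that $(x^\mu)^2/N$ ``concentrates to yield 1 by the central limit theorem'': your Cauchy--Schwarz bound on the fluctuation piece (using $\mathrm{Var}(|x^\mu|^2)=2N$) and your Gr\"onwall-type moment control of $v^\mu$ up to time $NT$ (the paper's Lemma~\ref{l:finite-v}) are precisely the ingredients needed to make the stated $\mathcal{O}(N^{-3/2})$ bound rigorous and uniform in $\mu$.
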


\begin{proof}
  We first recall that $m^\mu$ contains all time-dependent order parameters
  $R^\mu$, $Q^\mu$, and $v^\mu$, so we will prove the Lemma in turn for each of
  them. In fact, in each case we can prove a slightly stronger result which
  encompasses the required bound.

  For the \emph{teacher-student} overlaps $R_{in}^\mu$, we multiply the
  update~\eqref{eq:sgdw} with $w^*_n/N$ on both sides and find that
  \begin{equation}
    \label{eq:R_update}
    R_{in}^\mup = R_{in}^\mu - \frac{\eta_w}{N}  v_i \rho_n^\mu g'(\lambda_i^\mu)
    \Delta^\mu \, .
  \end{equation}
  The local field of the teacher is $\rho_n^\mu \equiv w^*_n x^\mu / \sqrt{N}$ is
  a Gaussian random variable with mean zero and variance $T_{nn}$. Taking the
  conditional expectation, we find
  \begin{equation}
    \Emu R_{in}^\mup  - R_{in}^\mu = \frac{1}{N}  \eta_w  v_i \langle
    \rho_n^\mu \Delta^\mu g'(\lambda_i^\mu) \rangle
  \end{equation}
  as required.
  
  For the \emph{student-student} overlaps $Q_{ik}^\mu$, we multiply the
  update~\eqref{eq:sgdw} by $w_k^\mu / N$ and find that
  \begin{equation}
    \label{eq:Q_update}
    \begin{split}
      Q_{ik}^\mup &= Q_{ik}^{\mu} - \frac{1}{N}\left( \eta_w \Delta^\mu v^\mu_k
        g'(\lambda_k^\mu)\lambda_i^\mu + \eta_w \Delta^\mu
        v^\mu_i g'(\lambda_i^\mu) \lambda_k^\mu \right)\\
      & \qquad + \frac{1}{N} \left( \eta_w^2 {(\Delta^\mu)}^2 v_i^\mu v_k^\mu
        g'(\lambda_i^\mu) g'(\lambda_k^\mu) \frac{{(x^\mu)}^2}{N} \right)\, . 
    \end{split}
  \end{equation}
  Using assumption (A1), we see that the term ${(x^\mu)}^2/N$
  concentrates to yield 1 by the central limit theorem. Thus we find after
  taking the conditional expectation of both sides and using
  $\Emu \zeta^\mu = 0$ that
  \begin{equation}
    \Emu Q_{ik}^\mup - Q_{ik}^\mu = \frac{1}{N} f_Q(m^\mu) \, .
  \end{equation}
  Finally, it is easy to convince oneself that taking the conditional
  expectation of the update for the \emph{second-layer weights}~\eqref{eq:sgdv}
  yields
  \begin{equation}
    \Emu v_k^{\mu+1} - v_k^\mu  = \frac{1}{N} f_v(m^\mu)
  \end{equation}
  which completes the proof of Lemma~\ref{l:mean}.
\end{proof}

\subsection{Second moments of the increment $m^\mu$}

We now proceed to bound the second-order moments of the increments of the
time-dependent order parameters. We collect these bounds in the following lemma:
\begin{lemma}
  \label{l:variance}
  Under the assumptions of Theorem~\ref{theorem}, for all $\mu < N T$, we have
  that
  \begin{equation}
    \label{eq:lemma-variance}
    \EE {|| m^\mup - \Emu m^\mup||}^2 \le C(T) N^{-2}\, .
  \end{equation}
\end{lemma}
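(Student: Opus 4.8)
The plan is to exploit the tower property, writing $\EE \|m^\mup - \Emu m^\mup\|^2 = \EE\big[\Emu \|m^\mup - \Emu m^\mup\|^2\big]$, so that it suffices to bound the \emph{conditional} variance of a single SGD step and then take an outer expectation over the trajectory. Because $m^\mu$ collects a finite number of order parameters (by (A4)), I would split the squared norm into the contributions of the $R_{in}$, $Q_{ik}$ and $v_k$ blocks and bound each separately; the total is then a finite sum of such bounds. The guiding observation is that every increment read off from \eqref{eq:R_update}, \eqref{eq:Q_update} and \eqref{eq:sgdv} carries an explicit prefactor $1/N$, so each centred increment is $1/N$ times a conditionally zero-mean random variable, and its conditional second moment is therefore $N^{-2}$ times the conditional variance of that variable. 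The whole lemma thus reduces to showing those per-component random variables have conditionally bounded second moments.

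For the teacher--student overlaps, \eqref{eq:R_update} gives the centred increment as $-\tfrac{\eta_w}{N}v_i g'(\lambda_i^\mu)\rho_n^\mu\Delta^\mu$ minus its conditional mean. Here $g'(\lambda_i^\mu)$ is bounded by (A2), the local fields $\rho_n^\mu$ and $\lambda_i^\mu$ are centred Gaussians with variances $T_{nn}$ and $Q_{ii}^\mu$ (hence have all moments, controlled by the current order parameters), and $\Delta^\mu=\sum_k v_k^\mu g(\lambda_k^\mu)-\sum_m v_m^* g(\rho_m^\mu)-\sigma\zeta^\mu$ has bounded conditional moments because $g$ is bounded, the second-layer weights and $\sigma$ are finite, and $\zeta^\mu$ is Gaussian. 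H\"older's inequality then bounds the second moment of the product by a polynomial in the current order parameters $Q_{ii}^\mu$, $T_{nn}$, $v_k^\mu$, $v_m^*$ and $\sigma$, giving a conditional variance of order $N^{-2}$. The second-layer update \eqref{eq:sgdv} is handled identically, with $g(\lambda_i^\mu)$ in place of $g'(\lambda_i^\mu)\rho_n^\mu$.

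The delicate block is $Q_{ik}$, whose increment \eqref{eq:Q_update} contains, besides two terms analogous to those above, the quadratic piece $\tfrac{\eta_w^2}{N}(\Delta^\mu)^2 v_i^\mu v_k^\mu g'(\lambda_i^\mu)g'(\lambda_k^\mu)\,\tfrac{(x^\mu)^2}{N}$. The subtlety is that $(x^\mu)^2/N=\tfrac1N\sum_{j}(x_j^\mu)^2$ is a sum over all $N$ coordinates; I would control it by noting it has conditional mean $1$ and fluctuations of order $N^{-1/2}$, so it has bounded second (indeed all) moments uniformly in $N$. Since the remaining factors are again products of bounded $g'$'s and quantities with bounded moments, this piece is itself $O(1/N)$ in size with conditional variance $O(N^{-2})$; combining it with the two linear terms via $\|a+b+c\|^2\le 3(\|a\|^2+\|b\|^2+\|c\|^2)$ yields $\Emu|Q_{ik}^\mup-\Emu Q_{ik}^\mup|^2\le C(m^\mu)N^{-2}$ for a polynomial $C$ in the order parameters.

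Summing the finitely many blocks gives $\Emu\|m^\mup-\Emu m^\mup\|^2\le \tilde C(m^\mu)N^{-2}$, and the final step is to take the outer expectation and bound $\EE\,\tilde C(m^\mu)$ uniformly for all $\mu\le NT$ by a constant depending only on $T$. I expect this last step to be the main obstacle: it requires that the moments of the order parameters stay bounded along the \emph{entire} trajectory of length $NT$, rather than just at a fixed step. I would establish it by a discrete Gr\"onwall / moment-propagation argument --- each one-step drift $f(m^\mu)$ grows at most linearly in the order parameters (again because $g$ and $g'$ are bounded and the only potentially growing quantities are the weights entering $\Delta^\mu$), and accumulating $NT$ increments of size $O(1/N)$ keeps $\EE\,\tilde C(m^\mu)$ bounded by some $C(T)$ independent of $N$. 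This uniform-in-$\mu$, uniform-in-$N$ moment control is exactly what produces the $T$-dependent constant in \eqref{eq:lemma-variance} and feeds into the subsequent coupling argument.
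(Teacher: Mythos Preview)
Your approach is correct and in several respects more direct than the paper's. You reduce via the tower property to a per-step conditional-variance bound $\Emu\|m^{\mu+1}-\Emu m^{\mu+1}\|^2\le \tilde C(m^\mu)\,N^{-2}$ and then control $\EE\,\tilde C(m^\mu)$ along the trajectory by a Gr\"onwall-type argument. The paper instead writes a recursion for the \emph{unconditional} variance $\sigma_q^2=\EE(q^\mu-\EE q^\mu)^2$, which carries an $O(1/N)$ cross term $\tfrac{2}{N}\operatorname{Cov}(q^\mu,f_q)$; it shows by induction, Taylor-expanding $f_q$ around $\EE m^\mu$, that this cross term is in fact $O(N^{-2})$ once $\sigma_q^2=O(1/N)$. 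The trajectory moment control you correctly flag as the main obstacle is dispatched in the paper not by a generic Gr\"onwall inequality but by a dedicated a priori estimate, Lemma~\ref{l:finite-v}, which bounds $v_k^\mu$ directly for all $\mu\le NT$; once the second-layer weights are bounded and $g,g'$ are bounded by (A2), the update functions become essentially bounded and the induction closes immediately. This is worth borrowing, because your claim that the drifts grow ``at most linearly in the order parameters'' is not accurate as stated --- $f_Q$ contains the term $v_i v_k(\Delta^\mu)^2$, which is quartic in the second-layer weights --- so a one-shot Gr\"onwall on all order parameters simultaneously is awkward. Staging the argument (first bound $v_k^\mu$ as in Lemma~\ref{l:finite-v}, then propagate moments of $Q$ and $R$ with the $v$'s frozen at $C(T)$) makes your route go through cleanly.
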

Before proceeding with the proof, we state a simple technical lemma that will
be helpful in the following; we relegate its proof to
Sec.~\ref{sec:add-proof-details}. 
\begin{lemma}
  \label{l:finite-v}
  Under the same assumptions as Theorem~\ref{theorem}, we have for all
  $0\le\mu\le N T$ that
  \begin{equation}
    \EE v_k^\mu \le C(T)\, ,
  \end{equation}
  where $C(T)$ is a constant independent of $N$.
\end{lemma}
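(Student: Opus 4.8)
The plan is to turn the second-layer update~\eqref{eq:sgdv} into a closed scalar recursion and then apply a discrete Grönwall (comparison) argument. The crucial structural fact is that, by assumption (A2), $g$ is bounded, say $|g(x)|\le G$ for all $x$; hence every factor $g(\lambda_k^\mu)$ and $g(\rho_m^\mu)$ appearing in the update and in the error term $\Delta^\mu$ is bounded \emph{pathwise}, and the only unbounded source of randomness is the additive noise $\zeta^\mu$ entering through $\Delta^\mu$. Because the update for a single $v_k^\mu$ couples to all the other second-layer weights through $\Delta^\mu = \sum_j v_j^\mu g(\lambda_j^\mu) - \sum_m v_m^* g(\rho_m^\mu) - \sigma\zeta^\mu$, one cannot control $v_k^\mu$ in isolation. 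I would therefore work with the aggregate quantity $S^\mu \equiv \sum_{k=1}^K |v_k^\mu|$, which does close on itself.

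Concretely, I would first bound $|v_k^\mup|\le |v_k^\mu| + (\eta_v G/N)\big(G\,S^\mu + G V^* + \sigma|\zeta^\mu|\big)$, where $V^*\equiv\sum_m|v_m^*|$ is a finite constant by (A4). Summing over the $K$ student nodes gives a recursion of the form $S^\mup \le (1 + b/N)\,S^\mu + (c + \eta_v K G\sigma|\zeta^\mu|)/N$ with $b\equiv \eta_v K G^2$ and $c\equiv \eta_v K G^2 V^*$ finite constants independent of $N$. Taking the full expectation and using $\EE|\zeta^\mu|=\sqrt{2/\pi}$ yields the deterministic linear recursion $\EE S^\mup \le (1+b/N)\,\EE S^\mu + c'/N$ for a constant $c'$. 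Unrolling it gives $\EE S^\mu \le (1+b/N)^\mu\,\EE S^0 + (c'/b)\big[(1+b/N)^\mu-1\big]$, and for $\mu\le NT$ the elementary estimate $(1+b/N)^\mu\le (1+b/N)^{NT}\le e^{bT}$ together with the boundedness of $S^0$ (A3) bounds the right-hand side by a constant $C(T)$ independent of $N$. Since $\EE|v_k^\mu|\le \EE S^\mu$, this proves the lemma (indeed the slightly stronger statement with $|v_k^\mu|$).

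The argument is essentially routine, and the only genuine subtlety — which is also what makes the bound non-trivial — is the self-coupling of the second-layer weights. This coupling produces the multiplicative factor $(1+b/N)$ rather than a purely additive increment, so over the $\mathcal{O}(N)$ steps of the time horizon the bound grows like $e^{bT}$; the point is precisely that this remains \emph{finite} for any fixed $T$, even though it diverges as $T\to\infty$. This is why the statement is necessarily local in time and why the constant must be allowed to depend on $T$. I expect the minor bookkeeping in keeping all constants independent of $N$ (so that $K$, $M$, $\eta_v$, $\sigma$ enter only through (A4)) to be the main thing requiring care, rather than any conceptual difficulty.
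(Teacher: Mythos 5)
Your proof is correct and follows essentially the same route as the paper's: use the boundedness of $g$ (A2) and of the initial weights (A3) to show each increment is $O(1/N)$ up to a term proportional to the current weights, and accumulate over the $NT$ steps of the time horizon. Your discrete Gr\"onwall recursion on $S^\mu=\sum_k|v_k^\mu|$ in fact makes rigorous the self-coupling step that the paper's proof only gestures at (``at every step where the weight is bounded by a constant, its increase will be bounded by $C(T)N^{-1}$''), and handling the noise via $\EE|\zeta^\mu|$ is simpler than, and sufficient in place of, the paper's appeal to Hoeffding's inequality.
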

In the following, we will use $q$ to denote any order-parameter that is varying
in time, such as the teacher-student overlaps $R_{in}^\mu$, while we keep
$m^\mu$ as the collection of all order parameters, including those that are
static, such as the teacher-teacher overlaps $T_{nm}$.

\begin{proof}[Proof of Lemma~\ref{l:variance}]
  We first note all order parameters
  $q\in\left\{ R_{in}, Q_{ik}, v_k \right\}$ obey update equations of the form
  \begin{equation}
    q^\mup = q^\mu + \frac{1}{N} f_q(m^\mu, x^\mu)\, ,
  \end{equation}
  where we have emphasised that the update function $f_q(\cdot)$ may depend on
  all order parameters at time $\mu$ and the $\mu$th sample shown to the student
  $x^\mu$. For the variance $\sigma_q^2 = \EE (q- \EE q)^2$ of the order
  parameter $q$, a little algebra yields the recursion relation
  \begin{align}
    \label{eq:update-variance}
    \begin{split}
      \left(\sigma_q^\mup\right)^2 - \left(\sigma_q^\mu\right)^2 & = \frac{2}{N}
      \left( \EE q^\mu f_q(m^\mu, x^\mu) - \EE f_q(m^\mu, x^\mu) \EE q^\mu \right)\\
      & \qquad + \frac{1}{N^2} \left( \EE f_q(m^\mu, x^\mu)^2 - \left[\EE f_q(m^\mu,
          x^\mu)\right]^2 \right).
    \end{split}
  \end{align}
  We will now use complete induction to show that for any $q$, the update of the
  variance at every step is bounded by $C(T)N^{-2}$ as required. In particular,
  this means showing that the term proportional to $N^{-1}$ actually scales as
  $N^{-2}$.

  For the induction start, we note that by Assumption A3, we have
  $\sigma_q^0=0$. Hence the variance of any order parameter after a single step
  of SGD reads
  \begin{align}
    (\sigma_q^1)^2 & = \frac{2}{N}
                     \left( \EE q^0 \EE f_q(m^0, x^0) - \EE f_q(m^0, x^0) \EE q^0
                     \right) \nonumber \\
                   & \qquad + \frac{1}{N^2} \left( \EE f_q(m^0, x^0)^2 - \left[\EE f_q(m^0,
                     x^0)\right]^2 \right)\\
                   & = \frac{1}{N^2} \left( \EE f_q(m^0, x^0)^2 - \left[\EE f_q(m^0,
                     x^0)\right]^2 \right)\, . \label{eq:proof_induction_start}
  \end{align}
  In going from the first to the second line, we have used assumption (A3) by
  which the initial macroscopic state is deterministic and therefore the average
  $\EE$ is just an average over the first sample shown during training, which
  leads to the simplification of Eq.~\ref{eq:proof_induction_start}.
  
  For the induction step, we assume that the variance after $\mu<T$ steps is
  $(\sigma^\mu_v)^2 \le C(T) \mu N^{-2} \le C(T) \alpha N^{-1}$. By using the
  existence and boundedness of the derivatives of the activation function, we
  can write $m^\mu = \EE m^\mu + (m^\mu - \EE m^\mu)$ and expand the terms
  proportional to $N^{-1}$ using a multivariate Taylor expansion in
  $(m^\mu - \EE m^\mu)$. We find that
  \begin{equation}
    \left( \EE q^\mu f_q(m^\mu, x^\mu) - \EE f_q(m^\mu, x^\mu) \EE q^\mu
    \right)  \le C(T) \EE (m^\mu - \EE m^\mu) \le C(T) \sigma_q^2 \le C(T)
    \sigma_q N^{-1}.
  \end{equation}
  We are justified in truncating the expansion since we assumed that
  $\sigma_q^2 \le C(T) N^{-1}$.  If the functions $f_q(m, x)$ are bounded by a
  constant, this completes the induction and shows that the variance of the
  increment of the order parameters is bounded by $C(T)N^{-2}$, as required.

  It is easy to check that all three functions $f_v$, $f_R$ and $f_Q$ fulfil
  this condition because of the boundedness of $g(x)$ and its derivatives (A2)
  and of Lemma~\ref{l:finite-v}, which completes the proof of
  Lemma~\ref{l:variance}.
\end{proof}

\subsection{Putting it all together}

Having proved both Lemmas~\ref{l:mean} and~\ref{l:variance}, we can proceed to
prove Theorem~\ref{theorem} by using the coupling trick in the form given by
Wang et al.~\cite{Wang2018} for another online learning problem, namely the
training of generative adversarial networks. We paraphrase the coupling trick as
given by Wang et al.\ in the following to make the proof self-contained and
refer to the supplemental material of their paper for additional details.

\begin{proof}[Proof of Theorem~\ref{theorem}]
  We first define a stochastic process $b^\mu$ that is coupled with the Markov
  process $m^\mu$ as
  \begin{equation}
    b^\mup = b^\mu + \frac{1}{N}g(m^\mu) + m^\mup - \Emu m^\mup\, .
  \end{equation}
  This process lives in the same space as $m^\mu$.  Wang et al.~\cite{Wang2018}
  showed that for such a process, when Lemma~\ref{l:mean} holds, we have that
  \begin{equation}
    \label{eq:coupling-bound1}
    \EE || b^\mu - m^\mu || \le C(T) N^{-1/2}
  \end{equation}
  for all $\mu \le N T$. We then define a deterministic process
  \begin{equation}
    d^{\mu+1} = d^\mu + \frac{1}{N} g(d^\mu),
  \end{equation}
  which is a standard first-order finite difference approximation of the
  equations of motion~\eqref{eq:eom}, and also lives in the space as
  $m^\mu$. Invoking a standard Euler argument for first-order finite differences
  gives
  \begin{equation}
    \label{eq:euler-argument}
    \EE ||d^\mu - m(\mu / N)|| \le C(T) N^{-1}.
  \end{equation}
  Wang et al.~\cite{Wang2018} further showed that for such a process, using
  Lemma~\ref{l:variance}, we have
  \begin{equation}
    \label{eq:coupling-bound2}
    \EE ||b^\mu - d^\mu || \le C(T) N^{-1}.
  \end{equation}
  Finally, combining Eqs.~\eqref{eq:coupling-bound1}, \eqref{eq:coupling-bound2}
  and \eqref{eq:euler-argument}, we have
  \begin{equation}
    \EE ||m^\mu - m(\mu / N) || \le C(T) N^{-1/2}
  \end{equation}
  which completes the proof.
\end{proof}

\subsection{Additional proof details}
\label{sec:add-proof-details}

\begin{proof}[Proof of Lemma~\ref{l:finite-v}]
  The increment of $v_k$ reads explicitly
  \begin{equation}
    v_k^{\mu+1} - v_k^{\mu} = \frac{\eta_v}{N} \left[\sum_m
      v^*_m g(\rho_m^\mu) -  \sum_k v_k^\mu g\left(\lambda_k^\mu\right) - \sigma \zeta^\mu\right]\, .
  \end{equation}
  To bound the value of $v_k^\mu$ after $\mu$ steps, we consider the three terms
  in the sum $v_k^\mu = \sum_{s=1}^\mu v_k^\mu$ each in turn. We first note that
  the sum of the output noise variables $\zeta^\mu$ is a simple sum over
  uncorrelated, (sub-) Gaussian random variables rescaled by $1/N$ and thus by
  Hoeffding's inequality almost surely smaller than a
  constant~\cite{Vershynin2009}.

  For the first two terms, we can use an argument similar to the one used to
  prove the bound on the variance of the increment of the order parameters. We
  first note that $g(\cdot)$ is a bounded function by Assumption~(A2) and that
  the initial conditions of the second-layer weights are bounded by a constant
  by Assumption (A3). Hence, after a first step, the weight has increased by a
  term bounded by $C(T)N^{-1}$. Actually, at every step where the weight is
  bounded by a constant, its increase will be bounded by $C(T)N^{-1}$. Hence the
  magnitude of $v_k^\mu \le C(T)$ for $0\le \mu \le N T$, as required.
\end{proof}

\section{Derivation of the ODE description of the generalisation dynamics of
  online learning}
\label{sec:supp_dynamics}

Here we demonstrate how to evaluate the averages found in the equations of
motion for the order parameters~\eqref{eq:eom}, following the classic work by
Biehl and Schwarze~\cite{Biehl1995} and Saad and
Solla~\cite{Saad1995a,Saad1995b}. We repeat the two main technical assumption of
our work, namely having a large network ($N\to\infty$) and a data set that is
large enough to allow that we visit every sample only once before training
converges. Both will play a key role in the following computations.

\subsection{Expressing the generalisation error in terms of order parameters}
\label{sec:supp_expr-gener-error}

We first demonstrate how the assumptions stated above allow to rewrite the
generalisation error in terms of a number of \emph{order parameters}. We have
\begin{align}
  \label{eq:supp_eg}
  \epsilon_g(\theta, \theta^*)& \equiv\frac{1}{2} \left\langle {\left[ \phi(x, \theta) -
                                \phi(x, \theta^*) \right]}^2 \right\rangle \\
  & = \frac{1}{2}\left\langle {\left[ \sum_{k=1}^K
      v_k g\left(\lambda_k\right) - \sum_{m=1}^M v^*_m g(\rho_m)\right]}^2
      \right\rangle,  \label{eq:supp_eg-explicit}
\end{align}
where we have used the local fields $\lambda_k$ and $\rho_m$. Here and
throughout this paper, we will use the indices $i,j,k,\ldots$ to refer to hidden
units of the student, and indices $n,m,\ldots$ to denote hidden units of the
teacher. Since the input $x^\mu$ only appears in $\epsilon_g$ only via products
with the weights of the teacher and the student, we can replace the
high-dimensional average $\langle \cdot \rangle$ over the input distribution
$p(x)$ by an average over the $K+M$ local fields $\lambda_k^\mu$ and
$\rho_m^\mu$. The assumption that the training set is large enough to allow that
we visit every sample in the training set only once guarantees that the inputs
and the weights of the networks are uncorrelated. Taking the limit $N\to\infty$
ensures that the local fields are jointly normally distributed with mean zero
($\langle x_n \rangle=0$). Their covariance is also easily found: writing
$w_{ka}$ for the $a$th component of the $k$th weight vector, we have
\begin{equation}
  \label{eq:supp_Q}
  \langle \lambda_k \lambda_l \rangle = \frac{\sum_{a,b}^N w_{ka} w_{lb}\langle
    x_a x_b \rangle}{N}=\frac{w_k w_l}{N} \equiv Q_{kl},
\end{equation}
since $\langle x_a x_b \rangle=\delta_{ab}$. Likewise, we define
\begin{equation}
  \label{eq:supp_RandT}
  \langle \rho_n \rho_m \rangle = \frac{w^*_n w^*_m}{N} \equiv T_{nm}, \quad
  \langle \lambda_k \rho_m \rangle = \frac{w_k w^*_m}{N} \equiv R_{km}.
\end{equation}
The variables $R_{in}$, $Q_{ik}$, and $T_{nm}$ are called \emph{order
  parameters} in statistical physics and measure the overlap between student and
teacher weight vectors $w_i$ and $w^*_n$ and their self-overlaps,
respectively. Crucially, from Eq.~\eqref{eq:supp_eg-explicit} we see that they are
sufficient to determine the generalisation error $\epsilon_g$. We can thus write
the generalisation error as
\begin{equation}
  \label{eq:supp_eg_I2}
  \epsilon_g = \frac{1}{2} \sum_{i,k} v_i v_k I_2(i, k)
    + \frac{1}{2}\sum_{n, m} v^*_n v^*_m I_2(n, m)
    - \sum_{i, n} v_i v^*_n I_2 (i, n),
\end{equation}
where we have defined
\begin{equation}
  \label{eq:supp_I2}
  I_2(i, k) \equiv \langle g(\lambda_i) g(\lambda_k) \rangle.
\end{equation}
Assuming sigmoidal activation functions $g(x)=\erf(x/\sqrt{2})$ allows us to
evaluate the average $I_2(i, k)$ analytically:
\begin{equation}
  I_2(i, k) =  \frac{1}{\pi} \arcsin \frac{Q_{ik}}{\sqrt{1 + Q_{ii}}\sqrt{1 + Q_{kk}}}.
\end{equation}
The average in Eq.~\eqref{eq:supp_I2} is taken over a normal distribution for the
local fields $\lambda_i$ and $\lambda_k$ with mean $(0, 0)$ and covariance
matrix
\begin{equation}
  C_2 = \begin{pmatrix}
    Q_{ii} & Q_{ik} \\
    Q_{ik} & Q_{kk}
  \end{pmatrix}.
\end{equation}
Since we are using the indices $i,j,\ldots$ for student units and $n,m,\ldots$
for teacher hidden units, we have
\begin{equation}
  I_2(i, n)=\langle g(\lambda_i)g(\rho_n) \rangle,
\end{equation}
where the covariance matrix of the joint of distribution $\lambda_i$ and $\rho_m$
is given by 
\begin{equation}
  C_2 = \begin{pmatrix}
    Q_{ii} & R_{in} \\
    T_{in} & T_{nn}
  \end{pmatrix}.
\end{equation}
and likewise for $I_2(n, m)$. We will use this convention to denote integrals
throughout this section. For the generalisation error, this means that it can be
expressed in terms of the order parameters alone as
\begin{multline}
  \label{eq:supp_eg-order-parameters}
  \epsilon_g = \frac{1}{\pi} \sum_{i,k} v_i v_k \arcsin
  \frac{Q_{ik}}{\sqrt{1 + Q_{ii}}\sqrt{1 + Q_{kk}}}
  + \frac{1}{\pi} \sum_{n,m} v^*_n v^*_m \arcsin
  \frac{T_{nm}}{\sqrt{1 + T_{nn}}\sqrt{1 + T_{mm}}}\\
  - \frac{2}{\pi} \sum_{i,n} v_i v^*_n \arcsin
  \frac{R_{in}}{\sqrt{1 + Q_{ii}}\sqrt{1 + T_{nn}}}.
\end{multline}

\subsection{ODEs for the evolution of the order parameters}
\label{sec:supp_odes-evolution-order}

Expressing the generalisation error in terms of the order parameters as we have
in Eq.~\eqref{eq:supp_eg-order-parameters} is of course only useful if we can
track the evolution of the order parameters over time. We can derive ODEs that
allow us to do precisely that for the order parameters $Q$ by squaring the
weight update of $w$~\eqref{eq:sgdw} and for $R$ taking the inner product
of~\eqref{eq:sgdw} with $w^*_n$, respectively, which yields the equations of
motion~\eqref{eq:eom}.

To make progress however, \emph{i.e.} to obtain a closed set of differential
equations for $Q$ and $R$, we need to evaluate the averages
$\langle \cdot \rangle$ over the local fields. In particular, we have to compute
three types of averages:
\begin{equation}
  I_3 = \langle g'(a) b g(c) \rangle,
\end{equation}
where $a$ is one the local fields of the student, while $b$ and $c$ can be local
fields of either the student or the teacher;
\begin{equation}
  I_4 = \langle g'(a) g'(b) g(c) g(d) \rangle,
\end{equation}
where $a$ and $b$ are local fields of the student, while $c$ and $d$ can be
local fields of both; and finally
\begin{equation}
  \label{eq:supp_2}
  J_2 = \langle g'(a)g'(b) \rangle,
\end{equation}
where $a$ and $b$ are local fields of the teacher. In each of these integrals,
the average is taken with respect to a multivariate normal distribution for the
local fields with zero mean and a covariance matrix whose entries are chosen in
the same way as discussed for $I_2$.

We can re-write Eqns.~\eqref{eq:eom} with these definitions in a more explicit
form as~\cite{Saad1995a,Saad1995b,Riegler1995}
\begin{equation}
  \label{eq:supp_eom-long}
  \frac{\dd R_{in}}{\dd t} = \eta v_i \left[ \sum_m^M v^*_m I_3(i, n, m) -
    \sum_j^K v_j I_3(i, n, j) \right]\, ,
\end{equation}
\begin{align}
  \begin{split}
  \frac{\dd Q_{ik}}{\dd t} &= \eta_w v_i \left[ \sum_m^M v^*_m I_3(i, k,
      m) - \sum_j^K v_j I_3(i, k, j) \right]\\
    & \quad + \eta_w v_k \left[ \sum_m^M v^*_m I_3(k, i, m) - \sum_j^K v_j I_3(i, j, k) \right]\\
    & \quad + \eta_w^2 v_i v_k \left[ \sum_n^M\sum_m^M v^*_n v^*_m I_4(i, k, n, m) - 2
      \sum_j^K \sum_n^M v_j v^*_n I_4(i, k, j, n)
    \right. \\
    & \qquad \qquad \qquad \left. + \sum_j^K \sum_l^K v_j v_l I_4(i, k, j,
      l) +  \sigma^2 J_2(i, k)\right]
  \end{split}
\end{align}
\begin{equation}
  \frac{\dd v_i}{\dd t} = \eta_v \left[ \sum_n^M v^*_n I_2(i, n) - \sum_j^K
    v_j I_2(i, j) \right]\, .
\end{equation}
The explicit form of the integrals $I_2(\cdot)$, $I_3(\cdot)$, $I_4(\cdot)$ and
$J_2(\cdot)$ is given in Sec.~\ref{sec:supp_explicit-integrals} for the case
$g(x)=\erf\left(x/\sqrt{2}\right)$. Solving these equations numerically for $Q$
and $R$ and substituting their values in to the expression for the
generalisation error~\eqref{eq:supp_eg_I2} gives the full generalisation
dynamics of the student. We show the resulting learning curves together with the
result of a single simulation in Fig.~2 of the main text. We have bundled our
simulation software and our ODE integrator as a user-friendly library with
example programs at~\url{https://github.com/sgoldt/nn2pp}. In
Sec.~\ref{sec:supp_eg_analytical}, we discuss how to extract information from
them in an analytical way.

\section{Calculation of $\epsilon_g$ in the limit of small noise for Soft Committee Machines}
\label{sec:supp_eg_analytical}

Our aim is to understand the asymptotic value of the generalisation error
\begin{equation}
   \epsilon_g^* \equiv \lim_{\alpha\to\infty} \epsilon_g(\alpha).
\end{equation}
We focus on students that have more hidden units than the teacher, $K\ge
M$. These students are thus over-parameterised \emph{with respect to the
  generative model of the data} and we define
\begin{equation}
  \label{eq:supp_L}
  L \equiv K  - M
\end{equation}
as the number of additional hidden units in the student network. In this
section, we focus on the sigmoidal activation function
\begin{equation}
  \label{eq:supp_erf}
  g(x) = \erf\left(x/\sqrt{2}\right),
\end{equation}
unless stated otherwise.

Eqns.~(\ref{eq:supp_eom-long}ff) are a useful tool to analyse the generalisation
dynamics and they allowed Saad and Solla to gain plenty of analytical insight
into the special case $K=M$~\cite{Saad1995a,Saad1995b}. However, they are also a
bit unwieldy. In particular, the number of ODEs that we need to solve grows with
$K$ and $M$ as $K^2+K M$. To gain some analytical insight, we make use of the
symmetries in the problem, \emph{e.g.}  the permutation symmetry of the hidden
units of the student, and re-parametrised the matrices $Q_{ik}$ and $R_{in}$ in
terms of eight order parameters that obey a set of self-consistent ODEs for any
$K>M$. We choose the following parameterisation with eight order parameters:
\begin{align}
  \label{eq:supp_ansatz}
  Q_{ij} =& \begin{cases}
    Q & \quad i=j \le M, \\
    C & \quad i \neq j; \; i,j \le M,\\
    D & \quad i > M, j \le M \quad \mathrm{or}\quad i \le M, j > M,\\
    E & \quad i=j > M,\\
    F & \quad i \neq j; \; i,j > M,
  \end{cases}\\
  R_{in} =& \begin{cases}
    R & \quad i=n, \\
    S & \quad i \neq n; \; i \le M,\\
    U & \quad i > M,
  \end{cases}
\end{align}
which in matrix form for the case $M=3$ and $K=5$ read:
\begin{equation}
  R = \begin{pmatrix}
    R & S & S \\
    S & R & S \\
    S & S & R \\
    U & U & U \\
    U & U & U
  \end{pmatrix} \quad \mathrm{and} \quad 
  Q = \begin{pmatrix}
    Q & C & C & D & D \\
    C & Q & C & D & D \\
    C & C & Q & D & D \\
    D & D & D & E & F \\
    D & D & D & F & E \\
  \end{pmatrix}\, .
\end{equation}
We choose this number of order parameters and this particular setup for the
overlap matrices $Q$ and $R$ for two reasons: it is the smallest number of
variables for which we were able to self-consistently close the equations of
motion~\eqref{eq:supp_eom-long}, and they agree with numerical evidence obtained from
integrating the full equations of motion~\eqref{eq:supp_eom-long}.

By substituting this ansatz into the equations of
motion~\eqref{eq:supp_eom-long}, we find a set of eight ODEs for the order
parameters. These equations are rather unwieldy and some of them do not even fit
on one page, which is why we do not print them here in full; instead, we provide
a \emph{Mathematica} notebook where they can be found and interacted with
together with the source at~\url{http://www.github.com/sgoldt/nn2pp}. These
equations allow for a detailed analysis of the effect of over-parameterisation
on the asymptotic performance of the student, as we will discuss now.

\subsection{Heavily over-parameterised students can learn perfectly from a
  noiseless teacher using online learning }
\label{sec:no-noise}

For a teacher with $T_{nm}=\delta_{nm}$ and in the absence of noise in the
teacher's outputs ($\sigma=0$), there exists a fixed point of the ODEs with
$R=Q=1$, $C=D=E=F=0$, and perfect generalisation $\epsilon_g=0$.  Online
learning will find this fixed point~\cite{Saad1995a,Saad1995b}. More precisely,
after a plateau whose length depends on the size of the network for the
sigmoidal network, the generalisation error eventually begins an exponential
decay to the optimal solution with zero generalisation error. The learning rates
are chosen such that learning converges, but aren't optimised otherwise.

\subsection{Perturbative solution of the ODEs}
\label{sec:perturbation}

We have calculated the asymptotic value of the generalisation error
$\epsilon_g^*$ for a teacher with $T_{nm}=\delta_{nm}$ to first order in the
variance of the noise $\sigma^2$. To do so, we performed a perturbative
expansion around the fixed point
\begin{gather}
  R_0=Q_0=1,\\
  S_0=U_0=C_0=D_0=E_0=F_0=0,
\end{gather}
with the ansatz
\begin{equation}
  X = X_0 + \sigma^2 X_1
\end{equation}
for all the order parameters. Writing the ODEs to first order in $\sigma^2$ and
solving for their steady state where $X'(\alpha)=0$ yielded a fixed point with
an asymptotic generalisation error
\begin{equation}
  \label{eq:supp_egFinal}
  \epsilon_g^* = \frac{\sigma^2 \eta}{2 \pi} f(M, L, \eta) + \mathcal{O}(\sigma^3).
\end{equation}
$f(M, L, \eta)$ is an unwieldy rational function of its variables. Due to its
length, we do not print it here in full; instead, we give the full function in a
\emph{Mathematica} notebook together with our source code
at~\url{https://github.com/anon/...}. Here, we plot the results in various forms
in Fig.~\ref{fig:supp_eg_erf_scaling}. We note in particular the following
points:

\begin{figure}
  \centering
  \includegraphics[width=\linewidth]{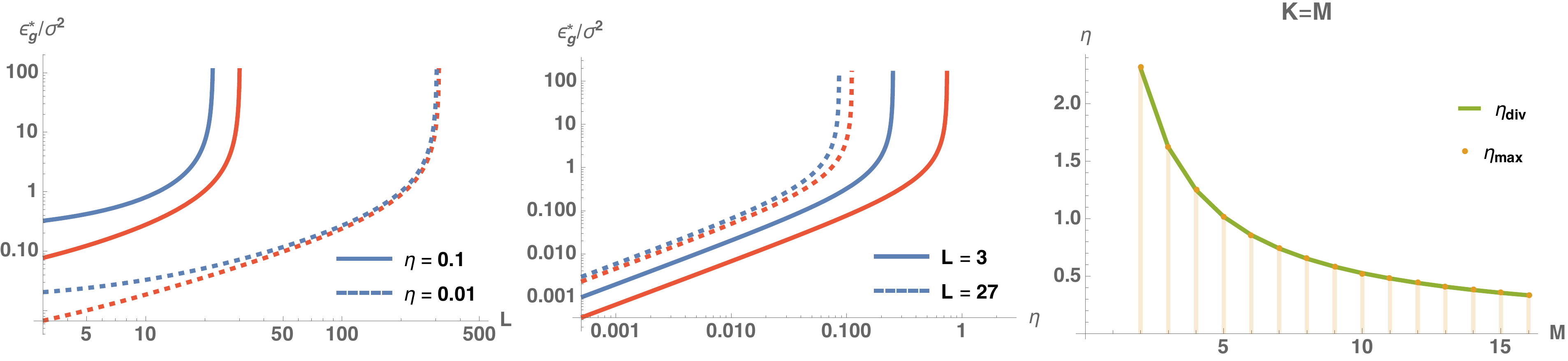}
  \caption{\label{fig:supp_eg_erf_scaling} \textbf{The final generalisation
      error of over-parameterised sigmoidal networks scales linearly with the
      learning rate, the variance of the teacher's output noise, and $L$.} We
    plot $\epsilon_g^*/\sigma^2$ in the limit of small noise,
    Eq.~\eqref{eq:supp_egFinal}, for $M=2$ (red) and $M=16$ (blue). It is clear
    that generalisation error increases with the number of superfluous units $L$
    at fixed learning rate (\emph{left}) and the learning rate $\eta$
    (\emph{middle}). \emph{Right:} For $K=M$, the learning rate
    $\eta_{\mathrm{div}}$ at which our perturbative result diverges is precisely
    the maximum learning rate $\eta_{\max}$ at which the exponential convergence
    to the optimal solution is guaranteed for $\sigma=0$,
    Eq.~\eqref{eq:supp_etaMax}}
\end{figure}

\begin{description}
\item[$\epsilon_g^*$ increases with $L$, $\eta$] The two plots on the left show
  that the generalisation error increases monotonically with both $L$ and $\eta$
  while keeping the other fixed, respectively, for teachers with $M=2$ (red) and
  $M=16$ (blue)
\item[The role of the learning rate $\eta$] Mitigating this effect by decreasing
  the learning rate $\eta$ for larger students raises two problems: a lower
  learning rate yields longer training times, and longer training times imply
  that more data is required until convergence. This is in agreement with
  statistical learning theory, where models with more parameters generalise just
  as well as smaller ones given enough data, despite having a higher complexity
  class as measured by VC dimension or Rademacher complexity~\cite{Mohri2012},
  for example. Furthermore, we show in Sec.~\ref{sec:perturbation} that even
  with $\eta\sim1/K$, the generalisation error increases with $L$ before
  plateauing at a constant value. Moreover, we show in
  Fig.~\ref{fig:eg_erf_lr_rescaled} that the asymptotic generalisation
  error~\eqref{eq:supp_egFinal} of a student trained using SGD with learning
  rate $\eta=1/K$ still increases with $L$ before plateauing at a constant value
  that is independent of $M$.
  \begin{figure}[h!]
    \centering
    \includegraphics[width=.5\linewidth]{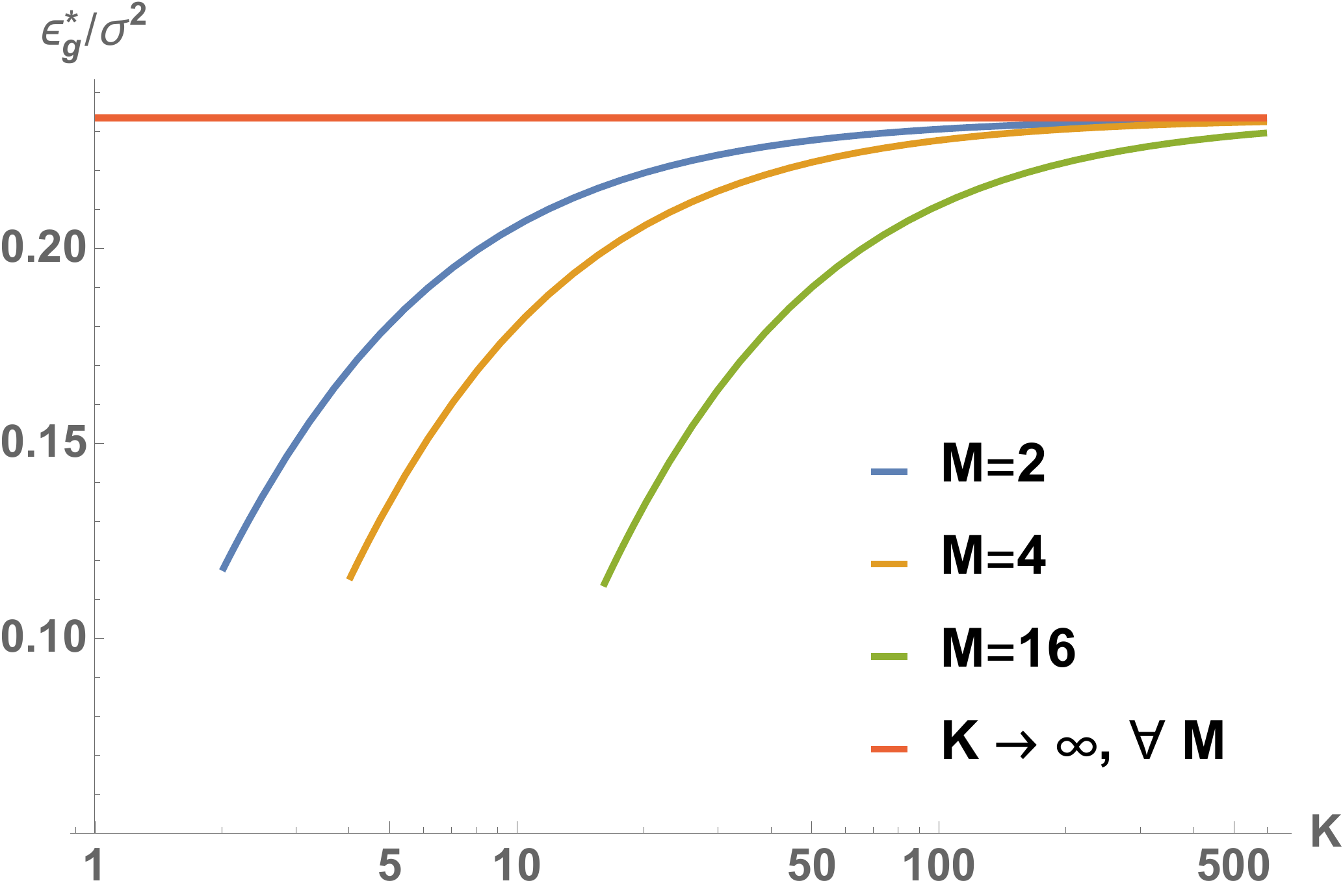}
    \caption{\label{fig:eg_erf_lr_rescaled} \textbf{Asymptotic generalisation
        error for sigmoidal soft committee machines with learning rate
        $\eta=1/K$.} We plot the asymptotic generalisation error
      $\epsilon_g^*$~\eqref{eq:supp_egFinal} over $\sigma^2$ of a student with a
      varying number of hidden units trained on data generated by teachers with
      $M=2, 4, 16$ using SGD with learning rate $1/K$. The generalisation error
      still increases with $K$, before plateauing at a constant value that is
      independent of $M$. Weight decay parameter $\kappa=0$.}
  \end{figure}
\item[Divergence at large $\eta$] Our perturbative result diverges for large
  $L$, or equivalently, for a large learning rate that depends on the number of
  hidden units $L\sim K$. For the special case $K=M$, the learning rate
  $\eta_{\mathrm{div}}$ at which our perturbative result diverges is precisely
  the maximum learning rate $\eta_{\max}$ for which the exponential
  convergence to the optimal solution is still guaranteed for
  $\sigma=0$~\cite{Saad1995b}
  \begin{equation}
  \label{eq:supp_etaMax}
  \eta_{\max} = \frac{\sqrt{3} \pi }{M+3/\sqrt{5}-1}
\end{equation}
  as we show in the right-most plot of Fig.~\ref{fig:supp_eg_erf_scaling}.
\item[Expansion for small $\eta$] In the limit of small learning rates, which is
  the most relevant in practice and which from the plots in
  Fig.~\ref{fig:supp_eg_erf_scaling} dominates the behaviour of $\epsilon_g^*$
  outside of the divergence, the generalisation error is linear in the learning
  rate. Expanding $\epsilon_g^*$ to first order in the learning rate reveals a
  particularly revealing form,
  \begin{equation}
    \label{eq:supp_egFinal1stOrderInLr}
    \epsilon_g^* = \frac{\sigma^2 \eta}{2 \pi} \left(L + \frac{M}{\sqrt{3}} \right) + \mathcal{O}(\eta^2)
  \end{equation}
  with second-order corrections that are quadratic in $L$. This is actually the
  sum of the asymptotic generalisation errors of $M$ continuous perceptrons that
  are learning from a teacher with $T=1$ and $L$ continuous perceptrons with
  $T=0$ as we calculate in Sec.~\ref{sec:cp}. This neat result is a
  consequence of the specialisation that is typical of SCMs with sigmoidal
  activation functions as we discussed in the main text.
\end{description}

\section{Asymptotic generalisation error of a noisy continuous perceptron}
\label{sec:cp}

What is the asymptotic generalisation for a continuous perceptron, \emph{i.e.} a
network with $K=1$, in a teacher-student scenario when the teacher has some
additive Gaussian output noise? In this section, we repeat a calculation
by Biehl and Schwarze~\cite{Biehl1995} where the teacher's outputs are given by
\begin{equation}
  y = g \left(\frac{w^* x}{\sqrt{N}}\right) + \zeta\, ,
\end{equation}
where $\zeta$ is again a Gaussian r.v.\ with mean 0 and variance $\sigma^2$. We
keep denoting the weights of the student by $w$ and the weights of the teacher
by $w^*$. To analyse the generalisation dynamics, we introduce the order
parameters
\begin{equation}
   R \equiv \frac{w w^*}{N}, \qquad Q \equiv \frac{w w}{N} \quad\mathrm{and}\quad T \equiv \frac{w^* w^*}{N}.
\end{equation}
and we explicitly do not fix $T$ for the moment. For $g(x)=\mathrm{erf}\left(x/\sqrt{2}\right)$,
they obey the following equations of motion:
\begin{align}
  \frac{d R}{d t} =&\frac{2 \eta}{\pi  \left(Q(t)+1\right)}  \left(\frac{T Q(t)-{R(t)}^2+T}{\sqrt{(T+1) Q(t)-{R(t)}^2+T+1}}-\frac{R(t)}{\sqrt{2 Q(t)+1}}\right) \\
  \frac{d Q}{d t} =& \frac{4 \eta}{\pi  (Q(t)+1)}  \left(\frac{R(t)}{\sqrt{2 (Q(t)+1)-R(t)^2}}-\frac{Q(t)}{\sqrt{2
                     Q(t)+1}}\right)\nonumber \\
                   & + \frac{4 \eta ^2}{\pi ^2 \sqrt{2 Q(t)+1}} \left[ -2 \arcsin\left(\frac{R(t)}{ \sqrt{(6 Q(t)+2)(2 Q(t)-R(t)^2+1)}}\right) \right. \nonumber \\
                   & \qquad +\left. \arcsin\left(\frac{2
                     \left(Q(t)-R(t)^2\right)+1}{2 \left(2 Q(t)-R(t)^2+1\right)}\right)+\arcsin
                     \left(\frac{Q(t)}{3 Q(t)+1}\right)\right]\nonumber \\
                   & +\frac{2 \eta ^2 \sigma ^2}{\pi  \sqrt{2 Q(t)+1}}.
\end{align}
The equations of motion have a fixed point at $Q=R=T$ which has perfect
generalisation for $\sigma=0$. We hence make a perturbative ansatz in $\sigma^2$
\begin{align}
    Q(t) =& T + \sigma^2 q(t) \\
    R(t) =& T + \sigma^2 r(t)
\end{align}
and find for the asymptotic generalisation error
\begin{equation}
    \epsilon_g^* = \frac{\eta  \sigma ^2 (4 T+1)}{2 \sqrt{2 T+1} \left(-\eta  \sqrt{8 T^2+6 T+1}+4
   \pi  T+\pi \right)} + \mathcal{O}\left(\sigma^3\right).
\end{equation}
To first order in the learning rate, this reads
\begin{equation}
  \epsilon_g^* = \frac{\eta  \sigma ^2}{2 \pi  \sqrt{2 T+1}},
\end{equation}
which should be compared to the corresponding result for the full SCMs,
Eq.~\eqref{eq:supp_egFinal1stOrderInLr}.

\begin{figure}[t!]
  \centering
  \includegraphics[width=\linewidth]{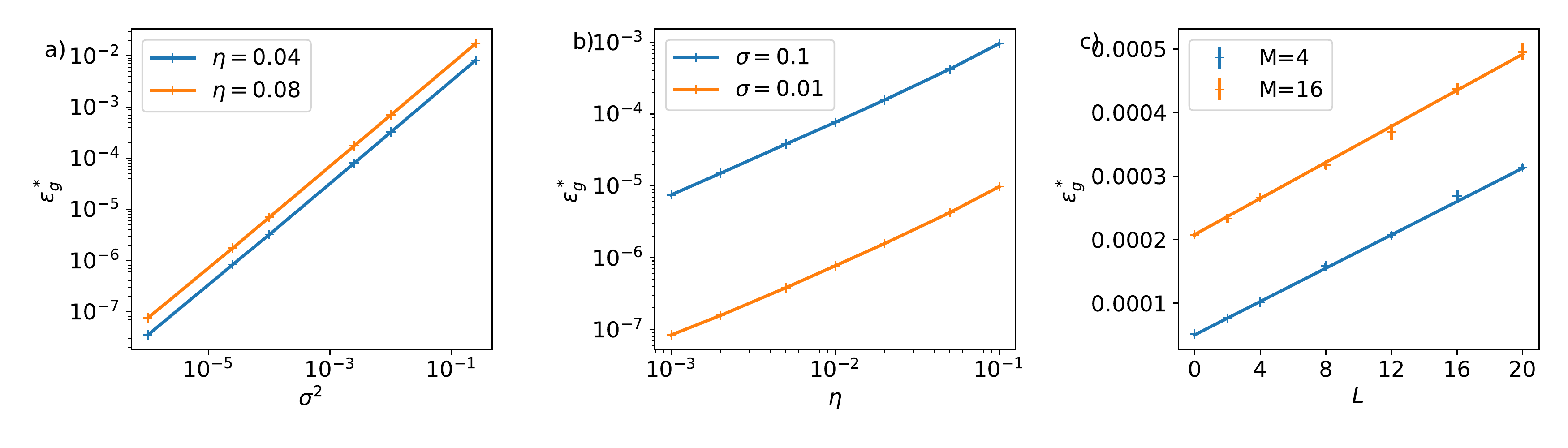}
  \caption{\label{fig:eg_relu_scaling} \textbf{The final generalisation error of
      over-parametrised ReLU networks scales as
      $\epsilon_g^* \sim \eta \sigma^2L$.} Simulations confirm that the
    asymptotic generalisation error $\epsilon_g^*$ of a ReLU student learning
    from a ReLU teacher scales with the learning rate $\eta$, the variance of
    the teacher's output noise $\sigma^2$ and the number of additional hidden
    units as $\epsilon_g\sim \eta \sigma^2L$, which is the same scaling as the
    one found analytically for sigmoidal networks in
    Eq.~\eqref{eq:supp_egFinal1stOrderInLr}. Straight lines are linear fits to
    the data, with slope $1$ in (a) and (b). Parameters: $M=2, K=6$ (a, b) and
    $M=4, 16$; $K=M + L$ (c); in all figures, $N=784, \kappa=0$.}
\end{figure}

\section{Calculation of the asymptotic generalisation error in two-layer
  sigmoidal networks}
\label{sec:supp_eg_two-layer}

In this section, we describe the ansatz we chose for the ODE to compute the
asymptotic generalisation error when training both layers with sigmoidal
activation function. As we describe in the main text, the ansatz used for the
Soft Committee Machine is not appropriate, since (i) all the hidden units of the
student are used, and (ii) several nodes overlap with the same teacher
node. Inspection of the overlaps after integration of the ODE thus suggested the
following ansatz when the number of nodes in the student is a multiple of the
number of teacher nodes, $K=ZM$:
\begin{align}
  \label{eq:supp_ansatz_both}
  Q_{ij} =& \begin{cases}
    Q & \quad i \mod M = j \mod M, \\
    C & \quad \mathrm{otherwise}
  \end{cases}\\
  R_{in} =& \begin{cases}
    R & \quad i \mod M = n \mod M, \\
    S & \quad \mathrm{otherwise}
  \end{cases}
\end{align}
which in matrix form for the case $M=2$ and $K=4$ read:
\begin{equation}
  R_{in} = \begin{pmatrix}
    R & S  \\
    S & R  \\
    R & S  \\
    S & R
  \end{pmatrix} \quad \mathrm{and} \quad 
  Q_{ik} = \begin{pmatrix}
    Q & C & Q & C \\
    C & Q & C & Q \\
    Q & C & Q & C \\
    C & Q & C & Q
  \end{pmatrix}
\end{equation}
Once this ansatz is found, the rest of the calculation follows along the same
lines as that of Sec.~\ref{sec:supp_eg_analytical}: we derive a reduced set of
coupled ODE for $Q,C,R$ and $S$, expand around the noiseless fixed point where
$R=1, S=0, Q=1, C=0$ and substitute the resulting fixed point into the
expression for the generalisation error, yielding the formula plotted in
Fig.~3c.

In Fig.~\ref{fig:both_linear_and_relu} we show the asymptotic performance
linear and ReLU two-layer networks that we discuss at the end of
Sec.~\ref{sec:both} of the main text.

\begin{figure}[t!]
  \centering
  \begin{subfigure}[c]{0.48\textwidth}
    \centering
    \includegraphics[width=\linewidth]{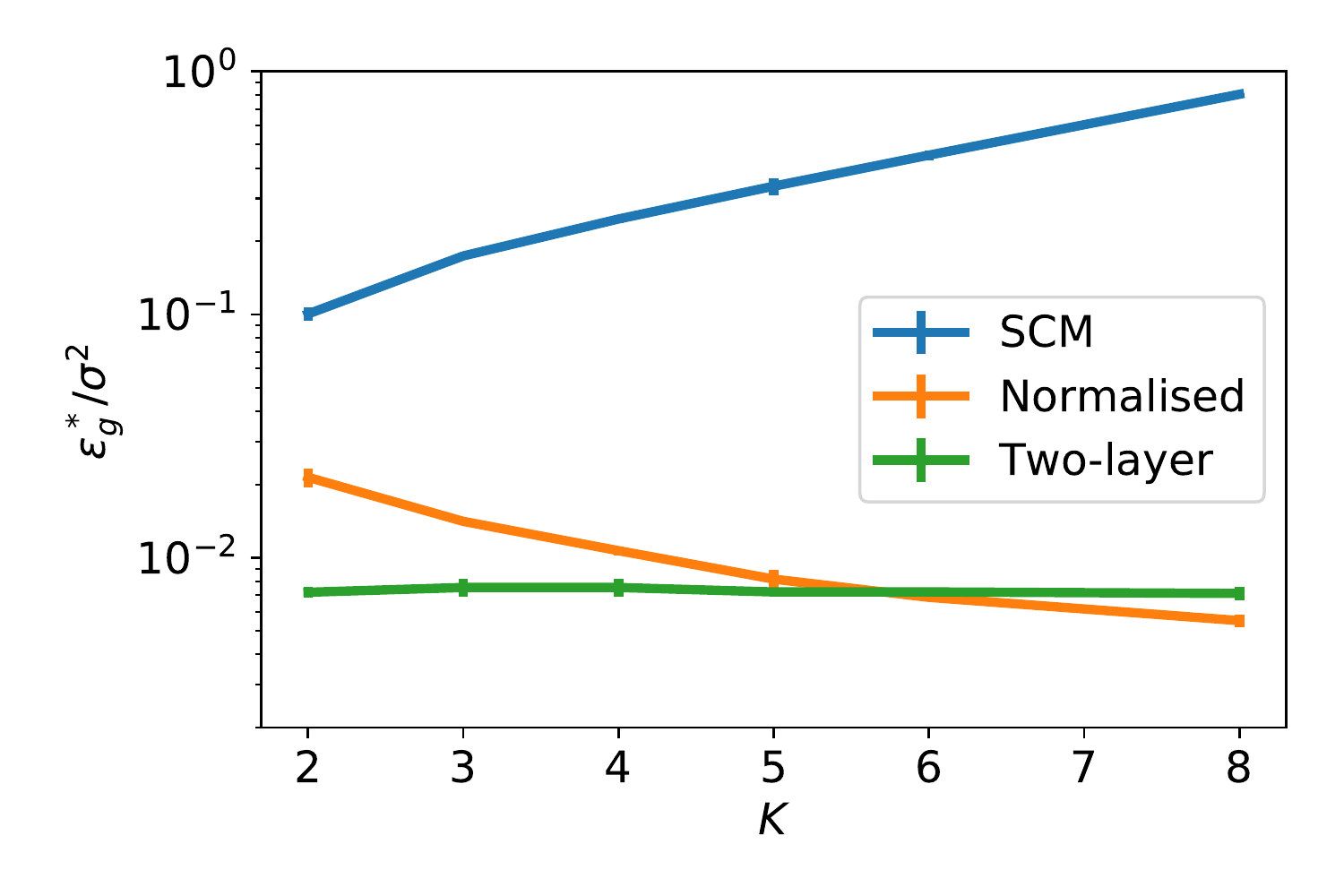}
  \end{subfigure}%
  \hfill%
  \begin{subfigure}[c]{0.48\textwidth}
    \centering
    \includegraphics[width=\linewidth]{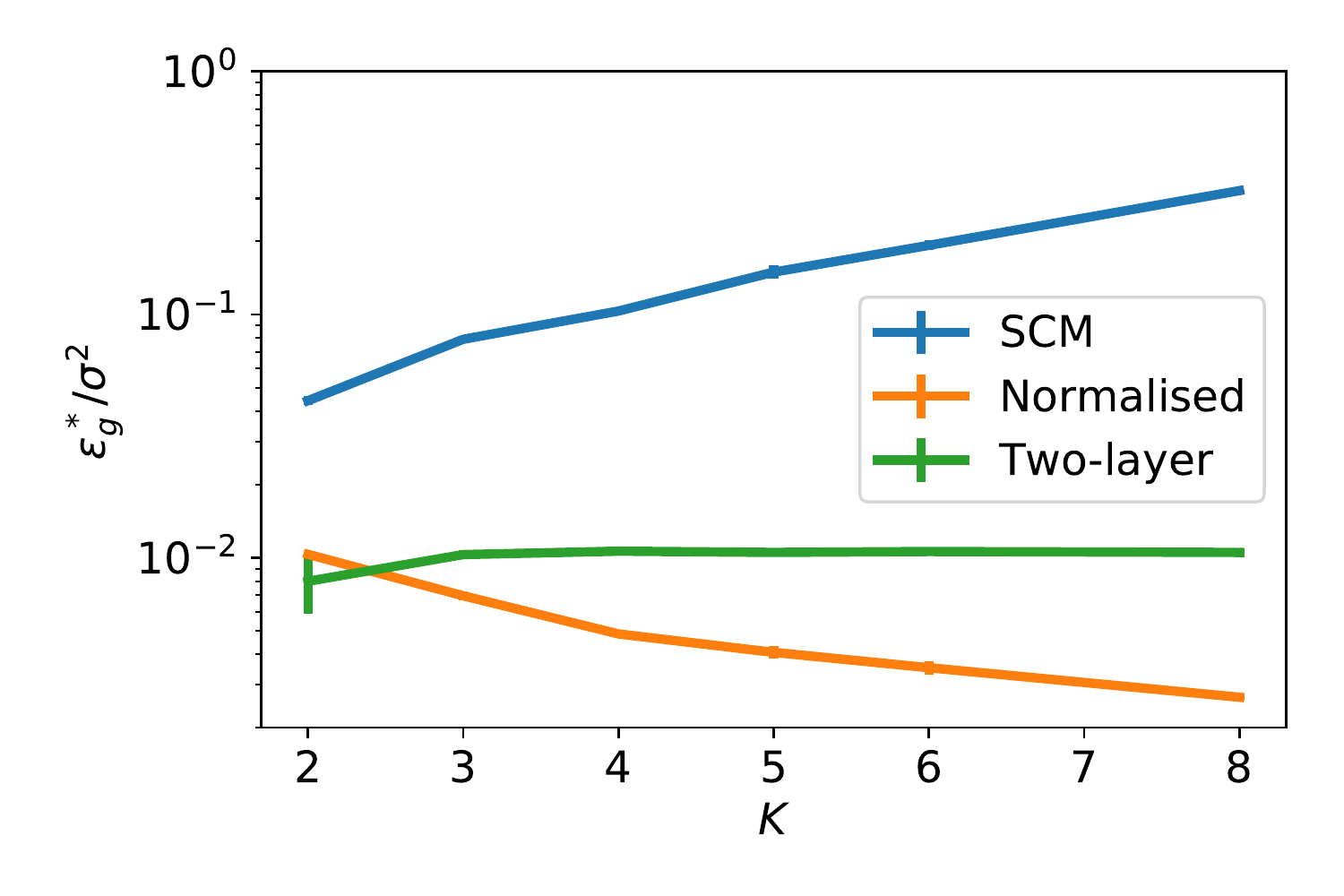}
  \end{subfigure}
  \caption{\label{fig:both_linear_and_relu} Asymptotic performance of linear
    (left) and ReLU (right) two layer networks. Error bars indicate one standard
    deviation over five runs, and the y-axis is the same in both
    plots. Parameters: $N=500, M=2, v^*=4, \eta=0.01, \sigma=0.01$. N.B. The
    right plot is the same as Fig.~\ref{fig:both_linear} of the main text.}
\end{figure}

\section{Unbalanced weights rescale effective learning rate in two layer linear networks}
\label{unbalanced}
If we consider a linear, two layer neural network of the form:
\begin{equation}
    \phi(x,\theta) = \sum_{m,j} {v_m w_{mj} x_j},
\end{equation}
where $v \in \mathbb{R}^{1\times M}$, $w \in \mathbb{R}^{M \times N}$ and $x\in \mathbb{R}^{N\times 1}$. The online SGD updates to the first and second layer weights will have the form:
\begin{equation}
    \Delta w^\mu_{mj} = \eta (y^\mu - \phi(x^\mu,\theta^\mu)) v_m^\mu x_j^\mu,
\end{equation}
and
\begin{equation}
    \Delta v^\mu_{m} = \eta (y^\mu - \phi(x^\mu,\theta^\mu)) \sum_j{w_{m j}  x_j^\mu}.
\end{equation}
If we define the product of student weights as a vector $u$:
\begin{equation}
    u_j = \sum_{m=1}^M{v_m w_{mj}},
\end{equation}
it follows that
\begin{equation}
    \Delta u_j^\mu = \sum_{m=1}^M{\left(v_m^\mu \Delta w^\mu_{mj} + \Delta v_m^\mu w^\mu_{mj}\right)}.  
\end{equation}
Substituting the form for the update in first and second layer weights into the expression above we find:
\begin{equation}
    \Delta u^\mu = \eta (y^\mu - u^\mu \cdot x^\mu)(x^\mu)^T \left( \mathbb{I}_N \|v^\mu\|^2 + (w^\mu)^T (w^\mu) \right).
\end{equation}
This suggests that the level of imbalance between the norm of weights at different layers may impact the noisy fluctuations in updates even at late training times. If we compare the update step of the network with another network which produces the same output but has a different scaling of the weights we can see that the effective learning rate will be different. For instance $\tilde{v} = a v$ and $\tilde{w} = \frac{1}{a} {w}$ leads to an equivalent network, but updates which scale as:
\begin{equation}
    \Delta u^\mu = \eta (y^\mu - u^\mu \cdot x^\mu)(x^\mu)^T \left( \mathbb{I}_N a^2 \|v^\mu\|^2 + \frac{1}{a^2} (w^\mu)^T (w^\mu) \right).
\end{equation}
We can think of this scaling of the weights as impacting the effective learning, and we have provided an expression for how the learning rate impacts generalisation error in this paper. Our finding thus suggests that weights with more balanced norms across layers will tend to lead to lower generalisation error during online learning.

\section{Additional experiments on Soft Committee Machines}
\label{sec:supp_add-experiments}
\subsection{Regularisation by weight decay does not help}
\label{sec:weight-decay}

\begin{figure}
  \centering
  \includegraphics[width=\linewidth]{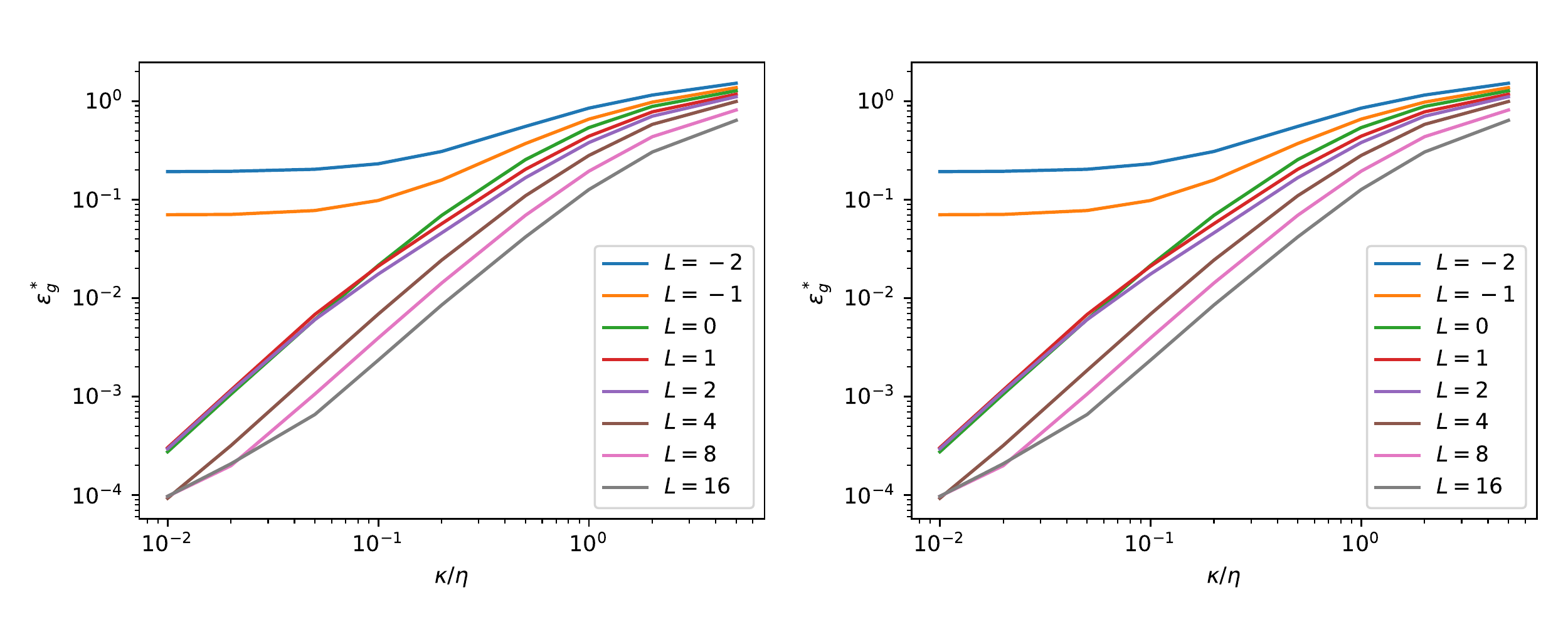}
  \caption{\label{fig:weight-decay} \textbf{Weight decay.} We plot the final
    generalisation error $\epsilon_g^*$ of a student with a varying number of
    hidden units trained on data generated by a teacher with $M=4$ using SGD
    with weight decay. The generalisation error clearly increases with the
    weight decay constant $\kappa$. Parameters: $N=784, \eta=0.1, \sigma=0.01$.}
\end{figure}

A natural strategy to avoid the pitfalls of overfitting is to regularise the
weights, for example by using explicit weight decay by choosing $\kappa>0$. We
have not found a setup where adding weight decay \emph{improved} the asymptotic
generalisation error of a student compared to a student that was trained without
weight decay in our setup. As a consequence, weight decay completely fails to
mitigate the increase of $\epsilon_g^*$ with $L$. We show the results of an
illustrative experiment in Fig.~\ref{fig:weight-decay}.

\subsection{SGD with mini-batches}
\label{sec:mini-batches}

\begin{figure}
  \centering
  \includegraphics[width=\linewidth]{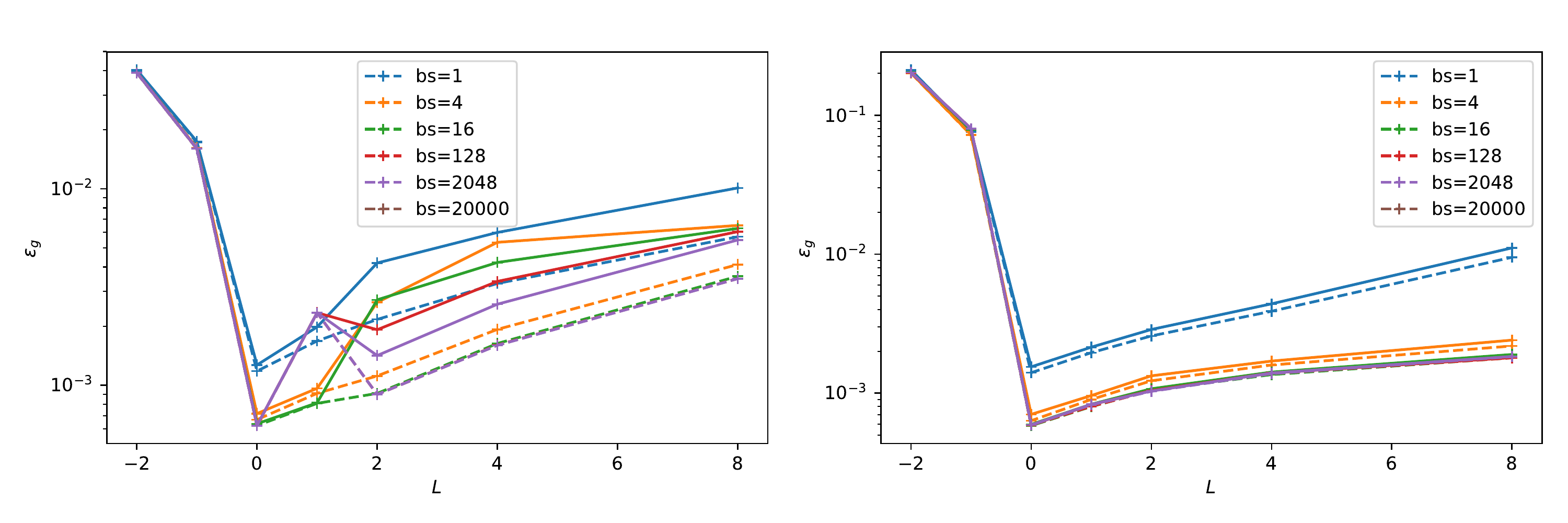}%
  \caption{\label{fig:minibatches} \textbf{SGD with mini-batches shows the same
      qualitative behaviour as online learning} We show the asymptotic
    generalisation error $\epsilon_g^*$ for students with sigmoidal (left) and
    ReLU activation function (right) for various $K$ learning from a teacher
    with $M = 4$. Between the curves, we change the size of the mini-batch used
    at each step of SGD from $1$ (online learning) to 20 000. Parameters:
    $N=500, \eta=0.2, \sigma=0.1, \kappa=0$.}
\end{figure}

One key characteristic of online learning is that we evaluate the gradient of
the loss function using a single sample from the training step per step. In
practice, it is more common to actually use a number of samples $b>1$ to
estimate the gradient at every step. To be more precise, the weight
update equation for SGD with mini-batches would read:
\begin{equation}
  \label{eq:supp_sgd-mb}
  w_k^{\mu+1} = w_k^{\mu} - \frac{\kappa}{N} w_k^\mu \\ -\frac{\eta}{b\sqrt{N}}
  \sum_{\ell=1}^b x^{\mu,\ell} g'(\lambda_k^{\mu,\ell}) \left[
    \phi(x^{\mu,\ell}, \theta)- y^{\mu,\ell}\right].
\end{equation}
where $x^{\mu,\ell}$ is the $\ell$th input from the mini-batch used in the $m$th
step of SGD, $\lambda_k^{\mu,\ell}$ is the local field of the $k$th student unit
for the $\ell$th sample in the mini-batch, etc. Note that when we use every
sample only once during training, using mini-batches of size $b$ increases the
amount of data required by a factor $b$ when keeping the number of steps
constant.

We show the asymptotic generalisation error of student networks of varying size
trained using SGD with mini-batches and a teacher with $M=4$ in
Fig.~\ref{fig:minibatches}. Two trends are visible: first, using increasing the
size of the mini-batches decreases the asymptotic generalisation error
$\epsilon_g^*$ up to a certain mini-batch size, after which the gains in
generalisation error become minimal; and second, the shape of the
$\epsilon_g^*-L$ curve is the same for all mini-batch sizes, with the minimal
generalisation error attained by a network with $K=M$.

\subsection{Using MNIST images for training and testing}
\label{sec:mnist}

\begin{figure}
  \centering
  \includegraphics[width=\linewidth]{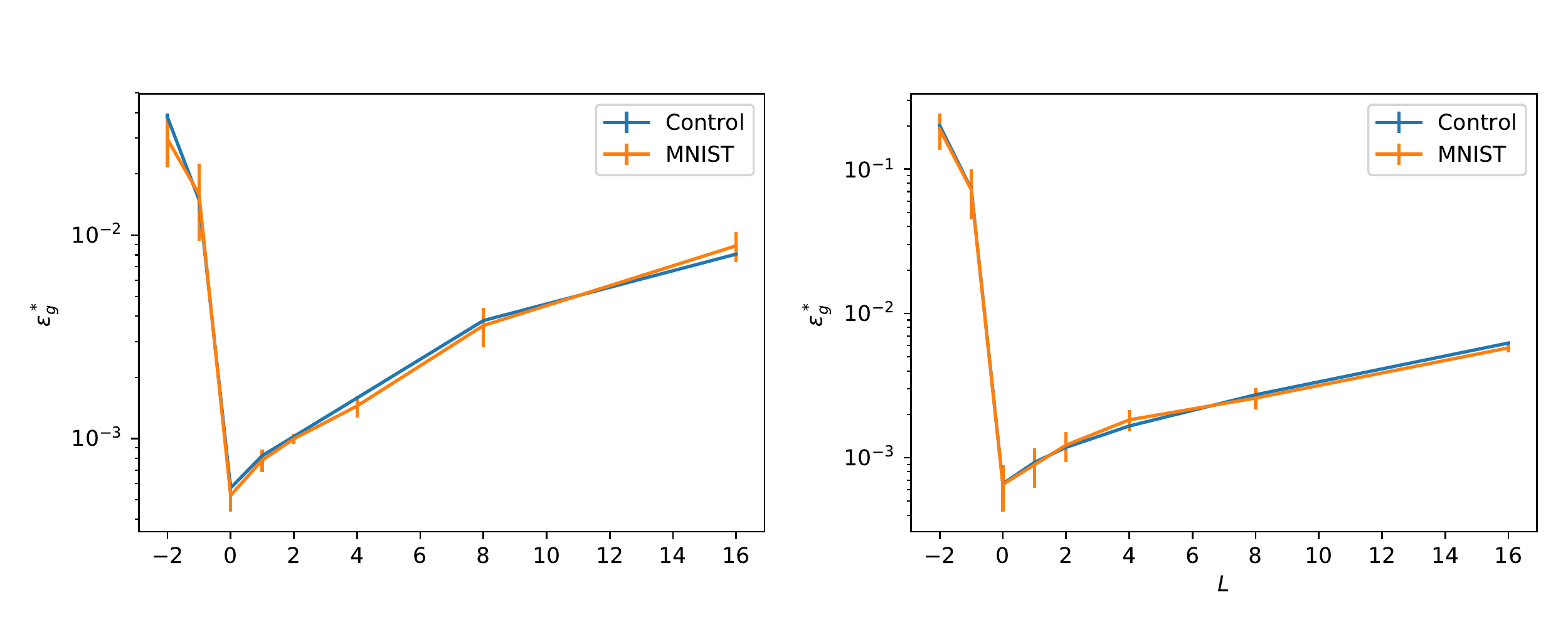}
  \caption{\label{fig:mnist-inputs} \textbf{Higher-order correlations in the
      input data do not play a role for the asymptotic generalisation.} We plot
    the final generalisation error $\epsilon_g^*$ after online learning of a
    student of various sizes from a teacher with $M=4$ using Gaussian inputs
    (blue) and MNIST images (red) for training and testing.
    $N=784, \eta=0.1, \sigma=0.1, \kappa=0$.}
\end{figure}

In the derivation of the ODE description of online learning for the main text,
we noted that only the first two moments of the input distribution matter for
the learning dynamics and for the final generalisation error. The reason for
this is that the inputs only appear in the equations of motion for the order
parameters as a product with the weights of either the teacher or the
student. Now since they are -- by assumption -- uncorrelated with those weights,
this product is the sum of large number of random variables and hence
distributed by the central limit theorem.

We have checked how our results change when this assumption breaks down in one
example where we train a network on a finite data set with non-trivial higher
order moments, namely the images of the MNIST data set. We studied the very same
setup that we discuss throughout this work, namely the supervised learning of a
regression task in the teacher-student scenario. We \emph{only} replace the the
inputs, which would have been i.i.d.\ draws from the standard normal
distribution, with the images of the MNIST data set. In particular, this means
that we do not care about the labels of the
images. Figure~\ref{fig:mnist-inputs} shows a plot of the resulting final
generalisation against $L$ for both the MNIST data set and a data set of the same
size, comprised of i.i.d.\ draws from the standard normal distribution, which
are in good agreement.

\subsection{The scaling of $\epsilon^*_g$ with $L$ for finite training
  sets}
\label{sec:supp_finite-ts}

\begin{figure*}
  \centering
  \includegraphics[width=\linewidth]{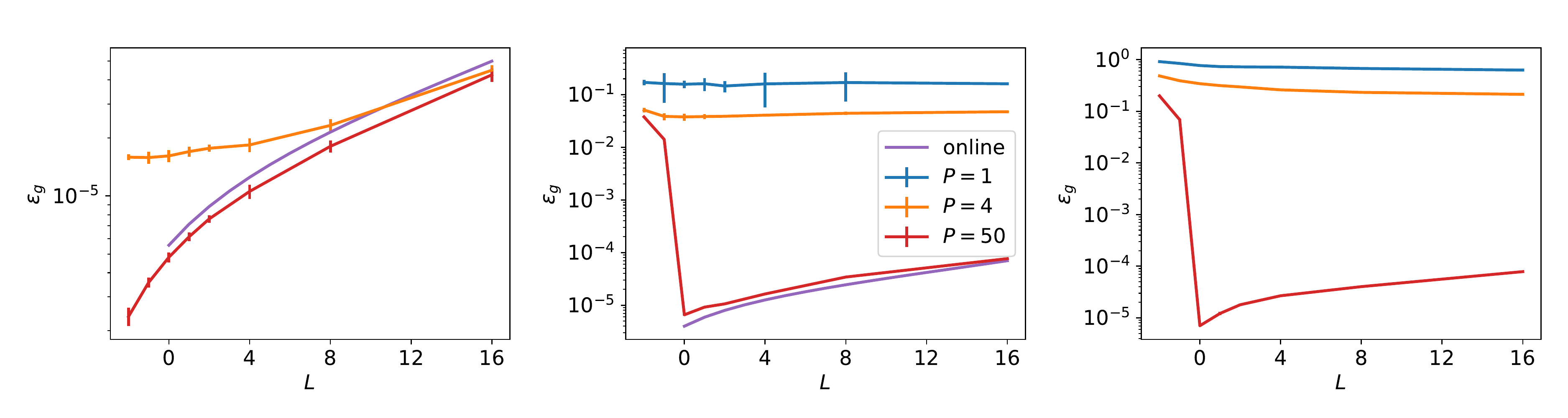}\\
  \includegraphics[width=\linewidth]{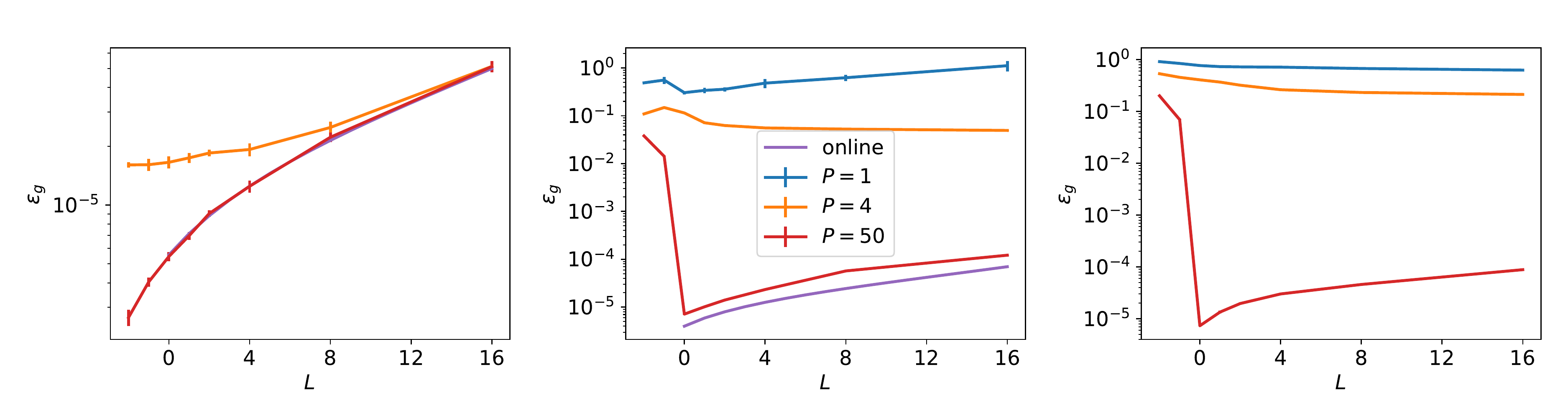}
  \caption{\label{fig:finite-ts}\textbf{The scaling of $\epsilon_g^*$
      with $L$ shows a similar dependence on the size of the training set for
      early-stopping (top) and final (bottom) generalisation error.} We plot the
    asymptotic and the early-stopping generalisation error after SGD with a
    finite training set containing $PN$ samples (linear, sigmoidal and ReLU
    networks from left to right). The result for online learning for linear and
    sigmoidal networks, Eqns.~\eqref{eq:egFinal} and~\eqref{eq:eg-lin} of the
    main text, are plotted in violet. Error bars indicate one standard deviation
    over 10 simulations, each with a different training set; many of them are
    too small to be clearly visible. Parameters:
    $N=784, M=4, \eta=0.1, \sigma=0.01$.}
\end{figure*}

In practice, a single sample of the training data set will be visited several
times during training. After a first pass through the training set, the online
assumption that an incoming sample $(x, y)$ is uncorrelated to the weights of
the network thus breaks down. A complete analytical treatment in this setting
remains an open problem, so to study this practically relevant setup, we turn to
simulations. We keep the setup described in Secs.~\ref{sec:dynamics}, but simply
reduce the number of samples in the training data set $P$. Our focus is again on
the final generalisation error after convergence $\epsilon_g^*$ for linear,
sigmoidal and ReLU networks, which we plot from left to right as a function of
$L$ in Fig.~\ref{fig:finite-ts}.

Linear networks show a similar behaviour to the setup with a very large training
set discussed in Sec.~\ref{sec:linear-networks}: the bigger the network, the
worse the performance for both $P=4$ and $P=50$. Again, the optimal network has
$K=1$ hidden units, irrespective of the size of the teacher. However, for
non-linear networks, the picture is more varied: For large training sets, where
the number of samples easily outnumber the free parameters in the network
($P=50$, red curve; this corresponds roughly to learning a data set of the size
of MNIST), the behaviour is qualitatively described by our theory from
Sec.~\ref{sec:final-eg}: the best generalisation is obtained by a network that
matches the teacher size, $K=M$. However, as we reduce the size of the training
set, this is no longer true. For $P=4$, for example, the best generalisation is
obtained with networks that have $K>M$. Thus the size of the training set with
respect to the network has an important influence on the scaling of
$\epsilon_g^*$ with $L$. Note that the early-stopping generalisation error,
which we define as the minimal generalisation error over the duration of
training, shows qualitatively the same behaviour as $\epsilon_g^*$.

\subsection{Early-stopping generalisation error for finite training sets}

A common way to prevent over-fitting of a neural network when training with a
finite training set in practice is early stopping, where the training is stopped
before the training error has converged to its final value yet. The idea behind
early-stopping is thus to stop training before over-fitting sets in. For the
purpose of our analysis of the generalisation of two-layer networks trained on a
fixed finite data set in Sec.~4 of the main text, we define the early-stopping
generalisation error $\hat{\epsilon}_g$ as the minimum of $\epsilon_g$ during
the whole training process. In Fig.~\ref{fig:finite-ts}, we reproduce Fig.~6
from the main text at the bottom and plot $\hat{\epsilon}_g$ obtained from the
very same experiments at the top. While the ReLU networks showed very little to
no over-training, the sigmoidal networks showed more significant
over-training. However, the qualitative dependence of the generalisation errors
on $L$ was observed to be the same in this experiment. In particular, the
early-stopping generalisation error also shows two different regimes, one where
increasing the network hurts generalisation ($P\gg K$), and one where it
improves generalisation or at least doesn't seem to affect it much (small
$P\sim K$).

\section{Explicit form of the integrals appearing in the equations of motion of
  sigmoidal networks}
\label{sec:supp_explicit-integrals}

To be as self-contained as possible, here we collect the explicit forms of the
integrals $I_2$, $I_3$, $I_4$ and $J_2$ that appear in the equations of motion
for the order parameters and the generalisation error for networks with
$g(x)=\erf\left( x/\sqrt{2} \right)$, see Eq.~\eqref{eq:supp_eom-long}. They
were first given by~\cite{Biehl1995,Saad1995a}. Each average
$\langle \cdot \rangle$ is taken w.r.t.\ a multivariate normal distribution with
mean 0 and covariance matrix $C\in\mathbb{R}^n$, whose components we denote with
small letters. The integration variables $u, v$ are always components of
$\lambda$, while $w$ and $z$ can be components of either $\lambda$ or $\rho$.
\begin{align}
  J_2 & \equiv \langle  g'(u) g'(v) \rangle =  \frac{2}{\pi}{\left( 1 + c_{11} +
        c_{22} + c_{11}c_{22}  - c_{12}^2       \right)}^{-1/2} \\[0.5em]
  I_2 & \equiv \frac{1}{2}\langle g(w)g(z) \rangle =  \frac{1}{\pi} \arcsin
        \frac{c_{12}}{\sqrt{1 + c_{11}}\sqrt{1 + c_{12}}}.\\[.5em]
  I_3 & \equiv \langle g'(u) w g(z) \rangle =
        \frac{2}{\pi}\frac{1}{\sqrt{\Lambda_3}} \frac{c_{23}(1 + c_{11}) -
        c_{12}c_{13}}{1 + c_{11}}\\[.5em]
  I_4 & \equiv \langle g'(u) g'(v) g(w) g(z) \rangle = \frac{4}{\pi^2}\frac{1}{\sqrt{\Lambda_4}}\arcsin\left( \frac{\Lambda_0}{\sqrt{\Lambda_1\Lambda2}} \right)
\end{align}
where
\begin{equation}
  \Lambda_4 = (1 + c_{11})  (1 + c_{22}) - c_{12}^2
\end{equation}
and
\begin{align}
  \Lambda_0 &= \Lambda_4 c_{34} - c_{23}c_{24}(1 + c_{11}) - c_{13}c_{14}(1 +
  c_{22}) + c_{12}c_{13}c_{24} + c_{12}c_{14}c_{23} \\[0.5em]
  \Lambda_1 &= \Lambda_4 (1 + c_{33}) - c_{23}^2(1 + c_{11}) - c_{13}^2(1 +
                c_{22}) + 2 c_{12}c_{13}c_{23} \\[0.5em]
  \Lambda_2 & = \Lambda_4 (1 + c_{44}) - c_{24}^2(1 + c_{11}) - c_{14}^2(1 +
                c_{22}) + 2 c_{12}c_{14}c_{24}
\end{align}

\end{document}